%% file: main.tex
\documentclass{article}
\usepackage[utf8]{inputenc}
\usepackage[T1]{fontenc}

\usepackage{arxiv}

\usepackage{amsmath}
\usepackage{amssymb}
\usepackage{amsfonts}
\usepackage{mathtools}
\usepackage{amsthm}
\usepackage{bbm}

\usepackage{microtype}
\usepackage{graphicx}
\usepackage{subcaption}
\usepackage{enumitem}
\usepackage{multirow}
\usepackage{multicol}
\usepackage{comment}
\usepackage{booktabs}

\usepackage{nicefrac}
\usepackage{url}
\usepackage{natbib}
\usepackage{doi}

\usepackage{hyperref}
\usepackage[capitalize,noabbrev]{cleveref}

\theoremstyle{plain}
\newtheorem{theorem}{Theorem}[section]
\newtheorem{proposition}[theorem]{Proposition}
\newtheorem{lemma}[theorem]{Lemma}

\theoremstyle{definition}

\theoremstyle{remark}

\title{The Energy Consumption of Transformer Fine-Tuning: A Roofline-Inspired Scaling Model}

\author{
    Mansour Zoubeirou a Mayaki \\
    \small{Université Lumière Lyon 2, CNRS, Ecole Centrale de Lyon, INSA Lyon,} \\
    \small{Universite Claude Bernard Lyon 1, LIRIS, UMR5205, 69007 Lyon, France} \\
    \texttt{mansour.mayaki at liris.cnrs.fr}
}

\renewcommand{\headeright}{}
\renewcommand{\undertitle}{}
\renewcommand{\shorttitle}{The Energy Consumption of Transformer Fine-Tuning}

\hypersetup{
pdftitle={The Energy Consumption of Transformer Fine-Tuning: A Roofline-Inspired Scaling Model},
pdfauthor={Mansour Zoubeirou a Mayaki},
pdfkeywords={transformer fine-tuning, energy consumption, roofline model, scaling laws}
}

\begin{document}

\maketitle
\begin{abstract}
\input{abstract}
\end{abstract}
\input{intro}
\input{revue}
\input{methodo}
\input{preModel}
\input{results}
\input{validation}
\input{Discussion}
\input{conclusion}

\clearpage
\appendix
\onecolumn

\input{results_appendix}
\input{proofs}
\input{modelConfigs}

\bibliographystyle{icml2026}
\bibliography{references}

\end{document}

%% file: abstract.tex
Transformer-based models underpin modern natural language processing but incur rapidly growing computational and energy costs. As training scales in both model size and parallelism, accurately predicting energy consumption has become critical for sustainable and cost-aware system design. We present a framework for modeling the energy consumption of Transformer training on multiple GPUs. Using controlled architectural
sweeps of BERT models, we relate measured energy to lightweight proxies for compute, memory traffic, and hardware efficiency. Inspired by roofline models, our approach incorporates a speedup-based hardware-efficiency factor that captures the effects of tensor parallelism and fully sharded data parallelism. We derive a scaling law model that accurately predicts training energy across heterogeneous configurations.

%% file: intro.tex
\section{Introduction}

The rapid growth of deep learning has driven remarkable progress across natural
language processing (NLP), computer vision, and multimodal reasoning. In
particular, Transformer-based architectures have become the dominant paradigm for
language modeling and representation learning, underpinning widely deployed models
such as BERT, T5, and GPT. This progress, however, has come at the cost of rapidly
increasing computational and energy demands. Training modern Transformer models
requires substantial compute resources, leading to high operational costs and a
growing environmental footprint.

Recent work has raised serious concerns about the sustainability of large-scale
machine learning \cite{wu2022sustainable,mehlin2023towards,yarally2023uncovering}.
As models scale in depth, width, and training data, energy consumption grows
nonlinearly and becomes increasingly sensitive to architectural choices,
parallelization strategies, and hardware efficiency. While several studies report
aggregate energy or carbon footprints for individual large models, a systematic
understanding of how \emph{training energy scales} with compute, memory traffic, and
parallel efficiency remains limited. Existing approaches often conflate model size
with hardware utilization and provide little guidance for predicting energy
consumption under new architectures or training configurations.
This paper addresses this gap by developing a predictive framework for training
energy consumption in Transformer models. Rather than treating energy as a
black-box outcome, we decompose it into interpretable components driven by
(i) total compute, (ii) memory traffic, and (iii) hardware efficiency. Our approach
is inspired by classical roofline models from high-performance computing and is
augmented with an empirically grounded treatment of parallel efficiency derived
from execution-time speedup.

We conduct controlled architectural sweeps of BERT models using the Hugging Face
implementation, varying depth, width, feed-forward dimension, and attention
configuration. From these sweeps, we construct lightweight proxies for parameter
count, floating-point operations, and activation memory traffic. To capture
parallelization effects, we introduce a speedup-based hardware-efficiency factor
for tensor parallelism and fully sharded data parallelism.

This work focuses on \emph{operational training energy} under controlled
architectural scaling, with empirical evaluation on BERT-style fine-tuning
workloads and common distributed training strategies (data parallelism, FSDP, and
tensor parallelism). Our goal is not to provide a universal carbon or energy
accounting framework, but to identify the dominant scaling mechanisms governing
training energy once system-level efficiency is taken into account. While our
experiments are conducted on encoder-based Transformers, the proposed
roofline-inspired formulation and speedup-based efficiency factor rely only on
generic compute--memory interactions and execution-time scaling, and therefore
extend conceptually to other Transformer architectures. Empirical generalization
beyond the studied regime is intentionally framed as trend-level consistency rather
than direct validation.

\vspace{-0.3cm}
\paragraph{Contributions.}
The main contributions of this paper are:
\vspace{-0.3cm}
\begin{itemize}[leftmargin=*]
    \item We propose a mechanistic, roofline-inspired decomposition of Transformer
    training energy into compute, memory, and hardware-efficiency components.
    \item We introduce a speedup-based hardware-efficiency model for tensor and data parallelism, enabling energy estimation without direct
    multi-GPU energy measurements.
    \item We derive and empirically validate a scaling law that predicts training
    energy across a wide range of BERT configurations.
\end{itemize}

%% file: revue.tex
\section{Related Work}

We review three strands of work most relevant to this paper:
(i) measurement and reporting of energy and carbon in machine learning,
(ii) empirical evidence on how scale, architecture, and parallelization affect
efficiency, and (iii) roofline-style frameworks for reasoning about compute--memory
trade-offs. We conclude by positioning our contribution relative to these lines of
work.

\subsection{Measuring and Reporting Energy and Carbon}
Early empirical studies emphasized the need for standardized reporting of energy
consumption and carbon emissions in machine learning, highlighting substantial
variation across measurement tools and experimental protocols
\cite{henderson2020towards}. Subsequent analyses documented the rapid growth of
energy use in state-of-the-art training and the importance of system-level factors
such as cooling, power usage effectiveness (PUE), and geographic siting
\cite{cottier2024rising,luccioni2024power}. Methodological work has proposed
approaches for estimating emissions from public artifacts
\cite{luccioni2023estimating}, as well as telemetry pipelines that attribute energy
across CPU, GPU, and memory subsystems
\cite{yarally2023uncovering,10244360}. In NLP, \citet{strubell2020energy}
popularized reporting carbon alongside accuracy, while more recent lifecycle-aware
frameworks combine operational and embodied emissions to support forward estimation
under varying hardware and grid assumptions \cite{faiz2024llmcarbon}.

\subsection{Scale, Architecture, and Efficiency}
Classical scaling laws relate parameter count, data, and compute to model quality
\cite{Kaplan2020,brown2020language}, but they do not directly characterize training
energy, which is mediated by utilization, memory traffic, and parallel efficiency.
Systems studies have shown that realized throughput for large models depends
critically on parallelization strategy and communication behavior
\cite{narayanan2021efficient}. Complementary case studies further show that
configuration choices such as batching, numerical precision, and interconnect
topology can induce large differences in energy consumption for comparable training
objectives \cite{patterson2021carbon}. Recent work has also begun to study energy scaling more directly, including efficiency-oriented analyses of local LLM deployment and operation
\cite{alvarez2025scalinglawsenergyefficiency}. In particular, \citet{alvarez2025scalinglawsenergyefficiency}
study CPU-only local inference for LLMs and VLMs on consumer and embedded devices,
showing that inference cost scales approximately linearly with token length for
LLMs and exhibits preprocessing-dependent threshold effects for VLMs. Their results
highlight compression and input preprocessing as practical levers for sustainable
edge inference, whereas our work focuses on \emph{distributed training}, where
parallelization strategy and execution efficiency are the primary determinants of
energy scaling.

\subsection{Roofline Models}
Roofline models from high-performance computing provide a principled framework for
reasoning about the interaction between compute throughput, memory bandwidth, and
achieved performance \cite{williams2009roofline}. Extensions of the roofline
approach have been used to analyze GPU kernels and deep learning workloads,
highlighting the roles of arithmetic intensity, memory traffic, and utilization in
determining performance and energy efficiency
\cite{6569852,jouppi2017datacenter,9297122,owens2018gpu}. More recent studies apply
roofline-style analyses to neural network training to explain when workloads are
compute-bound versus memory-bound and how architectural choices shift this balance
\cite{JacekApplyingRoofline,owens2018survey}. However, most roofline-based analyses
focus on performance or power at the kernel or single-device level, rather than on
training energy under distributed parallelism.

\subsection{Positioning of This Work}

Most prior work falls into three categories:
(i) reporting aggregate energy or emissions for specific training runs
\cite{strubell2020energy,patterson2021carbon},
(ii) modeling end-to-end emissions without explicitly isolating execution
efficiency \cite{faiz2024llmcarbon}, or
(iii) studying scaling and parallelization primarily through throughput and
time-to-solution rather than energy \cite{Kaplan2020,brown2020language}.
Recent work by \citet{zoubeirouamayaki2025energy} proposes an empirical energy
model based on power-law relationships among energy, FLOPs, and a
hardware-efficiency factor, validated on single-GPU measurements across LSTM, GRU,
and Transformer models. Their approach decomposes Transformer layers into
elementary operations and fits operation-level efficiency curves per GPU, yielding
accurate predictions under fixed hardware and execution conditions.

Our work is complementary but distinct. We target \emph{distributed Transformer
training} and explicitly study how training energy scales with parallelization.
Drawing inspiration from roofline theory, we decompose training energy into
compute- and memory-related components, then augment this decomposition with a
speedup-derived efficiency proxy that captures the effects of multi-GPU execution.
The resulting model is not intended as a universal energy law across all
architectures and hardware platforms; rather, it provides an interpretable and
predictive framework for reasoning about training energy within a controlled regime,
with explicit attention to tensor parallelism and fully sharded data parallelism.

%% file: methodo.tex
\section{Methodology}
\label{sec:method}

This section details our framework for predicting BERT \cite{devlin2019bert}
fine-tuning energy. Our methodology combines controlled architectural sweeps with
closed-form proxies for parameter count, compute, and memory traffic, together with
an explicit efficiency term for parallel execution. The goal is not to causally
isolate perfectly orthogonal factors, but to obtain sufficient variation in
architecture and system-level quantities to support an interpretable regression
model of training energy.

\textbf{Model map.}
Eq.~\eqref{eq:energy-decomp-main} gives a mechanistic decomposition of training
energy into compute, memory, and efficiency terms; Eq.~\eqref{eq:reduced-form-main}
gives a reduced-form scaling law fitted to these components.

\subsection{Tasks, Models and Training Protocol}
\label{subsec:tasks-models}

\textbf{Tasks and model suite.}
We fine-tune a family of BERT models on a suite of standard NLP tasks, including
sentence classification and extractive question answering. These models are
implemented using the Hugging Face \texttt{transformers} library, specifically
\texttt{BertForSequenceClassification}. To
study scaling behavior, we vary four core architectural hyperparameters: the number
of Transformer layers $L$, hidden dimension $d$, number of attention heads $h$, and
feed-forward dimension.

\textbf{Controlled sweeps.}
The architectural sweeps are designed to provide broad \emph{coverage} of the
configuration space rather than strict orthogonal control of compute and memory.
Changing depth, width, sequence length, and batch size necessarily affects multiple
quantities simultaneously; this is inherent to Transformer scaling. Our aim is
therefore to generate sufficient variation across configurations to statistically
disentangle the contributions of compute, memory proxy, and execution efficiency in
the downstream regression model.
Unless otherwise stated, each configuration is fine-tuned for a fixed number of
epochs under a consistent optimizer and learning-rate schedule. We log step-level
telemetry (time, power/energy counters when available) and aggregate metrics within
and across epochs. Full training hyperparameters are reported in
Appendix~\ref{app:train-details}.

\subsection{Compute, Parameter and Memory Proxies}
\label{subsec:proxies}

To connect architectural choices to energy, we use closed-form proxies for model
size, compute, and activation/memory traffic
\cite{devlin2019bert,brown2020language}. These proxies are lightweight, depend only
on $(L,d,\mathrm{ff})$ and sequence length, and are intended as \emph{minimal
sufficient statistics} for comparative scaling analysis rather than exact kernel- or
byte-level measurements.

\textbf{Parameter proxy.}
We approximate the dominant non-embedding parameter terms per Transformer layer
(attention projections and MLP weights), yielding
\begin{equation}
\label{eq:bert_params_proxy}
N(L,d,\mathrm{ff})
\;\approx\;
L\left(12d^2 + 2d\,\mathrm{ff}\right).
\end{equation}

\textbf{Compute proxy (FLOPs per token).}
We approximate training compute per token using a simplified decomposition into
projection/MLP cost and attention-related cost, giving
\begin{equation}
\label{eq:bert_flops_proxy}
C_{\text{tok}}(L,d,S)
\;\approx\;
12L d^2 \;+\; 3L S d,
\end{equation}
where $S$ denotes sequence length. Total training compute over a run is then
$C \approx C_{\text{tok}} \times (\text{tokens processed})$.

\textbf{Activation/memory traffic proxy.}
We use a proxy proportional to activation volume across layers,
\begin{equation}
\label{eq:bert_bytes_proxy}
M_{\mathrm{proxy}}(B,S,d,L)
\;\approx\;
B\,S\,d\,L,
\end{equation}
where $B$ is batch size. This proxy captures the leading dependence of
activation/gradient traffic on batch size, sequence length, width, and depth, and
serves as a practical substitute for direct bandwidth measurements. We emphasize
that $M_{\mathrm{proxy}}$ is an aggregate architectural proxy: it does not attempt to
separate activation traffic, optimizer-state movement, or communication volume at a
kernel level.

These proxies are used throughout the paper to relate measured energy to
architecture-dependent compute and memory demands under controlled sweeps.

\subsection{Instrumentation and Energy Accounting}
\label{subsec:instrumentation}

We measure training energy using complementary telemetry from Nsight Systems,
CodeCarbon, and system-utilization logs. Measurements are converted to kWh,
temporally aligned, and integrated over the active training interval after
discarding warm-up and cool-down phases. For each configuration, we report mean
energy over 5 epochs with standard error. Additional instrumentation details are
provided in Appendix~\ref{app:train-details}.

\subsection{Compute--Memory Energy Model}
\label{subsec:comp-mem}

We model batch energy per training step as the sum of compute, memory traffic, and
fixed overheads:
\begin{equation}
\label{eq:energy-decomp-main}
E = \epsilon_{\mathrm{comp}}\cdot C + \epsilon_{\mathrm{mem}}\cdot M + E_{0},
\qquad
\epsilon_{\mathrm{comp}}, \epsilon_{\mathrm{mem}} > 0.
\end{equation}
Here, $C$ denotes floating-point operations and $M$ denotes bytes moved to/from
high-bandwidth memory. The constant $E_{0}$ captures host, leakage, and unmodeled
overheads.

\begin{proposition}[Roofline energy lower bounds]
\label{prop:roofline}
Let $C$ be the FLOPs executed in one training step, $M$ the DRAM/HBM bytes moved,
$\mathrm{AI}\triangleq C/M$ the arithmetic intensity, $F_{\mathrm{peak}}$ the device
peak FLOP/s, and let $P_{\mathrm{dyn}}$ denote the dynamic power during the active
portion of the step. Define $\tau_{\mathrm{mem}}$ as the \emph{effective memory
service rate} (bytes/s), i.e., the rate at which memory traffic is serviced under
the workload. The classical roofline argument implies that the achieved throughput
is upper bounded by
\[
\min\{F_{\mathrm{peak}},\, \tau_{\mathrm{mem}}\cdot \mathrm{AI}\},
\]
which characterizes whether computation or memory traffic limits performance and
energy. Then the step time $T$ satisfies
\begin{equation}
T \;\ge\; \frac{C}{\min\{F_{\mathrm{peak}},\, \tau_{\mathrm{mem}}\cdot \mathrm{AI}\}},
\end{equation}
which yields the energy lower bounds
\begin{equation}
E \;\ge\; \frac{P_{\mathrm{dyn}}}{F_{\mathrm{peak}}}\cdot C
\quad\text{and}\quad
E \;\ge\; \frac{P_{\mathrm{dyn}}}{\tau_{\mathrm{mem}}}\cdot M.
\label{eq:roofline-bounds}
\end{equation}
\end{proposition}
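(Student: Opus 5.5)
The plan is to derive both energy bounds from the step-time bound, so the argument reduces to establishing the time bound and then multiplying by power. First we justify the time bound from the two throughput ceilings separately. During the active portion of a step, the device completes at most $F_{\mathrm{peak}}$ FLOPs per second. Hence executing $C$ FLOPs requires $T \ge C/F_{\mathrm{peak}}$. Likewise, by the definition of $\tau_{\mathrm{mem}}$ as the effective memory service rate, moving $M$ bytes requires $T \ge M/\tau_{\mathrm{mem}}$. Writing $M = C/\mathrm{AI}$, the second inequality reads $T \ge C/(\tau_{\mathrm{mem}}\cdot\mathrm{AI})$. Taking the larger of the two lower bounds gives
\[
T \;\ge\; \max\Bigl\{\frac{C}{F_{\mathrm{peak}}},\,\frac{C}{\tau_{\mathrm{mem}}\mathrm{AI}}\Bigr\}
\;=\; \frac{C}{\min\{F_{\mathrm{peak}},\,\tau_{\mathrm{mem}}\mathrm{AI}\}}.
\]
This is exactly the stated roofline time bound, and equivalently the statement that the achieved throughput $C/T$ is at most the roofline ceiling.

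Second, we pass from time to energy. We interpret $P_{\mathrm{dyn}}$ as the (average) dynamic power over the active interval of length $T$, so the dynamic energy of the step is $P_{\mathrm{dyn}}\,T$. The total step energy $E$ contains this dynamic part plus nonnegative static and overhead contributions (the $E_0$-type terms of Eq.~\eqref{eq:energy-decomp-main}). Therefore $E \ge P_{\mathrm{dyn}} T$. Substituting the separate bounds $T\ge C/F_{\mathrm{peak}}$ and $T\ge M/\tau_{\mathrm{mem}}$ gives the two inequalities in Eq.~\eqref{eq:roofline-bounds}. Equivalently, $E \ge P_{\mathrm{dyn}}\, C/\min\{\cdot\}$ implies both, since $1/\min\{a,b\} \ge 1/a$ and $1/\min\{a,b\}\ge 1/b$, and $C/(\tau_{\mathrm{mem}}\mathrm{AI}) = M/\tau_{\mathrm{mem}}$.

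The main obstacle is modeling rather than algebra. The argument requires that $P_{\mathrm{dyn}}$ be a power level valid throughout the interval being bounded, or at least an average over it, and that the omitted energy terms be nonnegative. I would state explicitly that $P_{\mathrm{dyn}}$ is defined as the time-average of dynamic power over the active portion, so that $E_{\mathrm{dyn}} = P_{\mathrm{dyn}} T$ holds by definition and not as an assumption. With that convention, the only substantive inputs are the two rate ceilings, and the rest is monotonicity.
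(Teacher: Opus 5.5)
Your proof is correct and follows the paper's argument: bound the step time by $C/\min\{F_{\mathrm{peak}},\tau_{\mathrm{mem}}\mathrm{AI}\}$, use $E \ge P_{\mathrm{dyn}}T$, and split the minimum (with $C/(\tau_{\mathrm{mem}}\mathrm{AI}) = M/\tau_{\mathrm{mem}}$). The differences are minor: you derive the roofline ceiling from the two separate rate limits rather than citing it, and you define $P_{\mathrm{dyn}}$ as the average over the active interval, which is slightly cleaner than the paper's remark that idle phases only lengthen $T$.
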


Full details for the proof of Proposition~\ref{prop:roofline} can be found in
Appendix~\ref{proof:roofline}.

\begin{theorem}[Expected energy scaling bands]
\label{thm:bands}
Assume (i) quasi-stationary power during a step; (ii) a stable op-mix as
hyperparameters vary; and (iii) over the hyperparameter ranges studied there exist
$k_1,k_2>0$ and an exponent $\beta\in[\tfrac12,1)$ such that the dominant DRAM
traffic co-scales with compute as
\[
k_1\,C^{\beta} \;\le\; M \;\le\; k_2\,C^{\beta}.
\]
Then for device-specific positives $\epsilon_{\mathrm{comp}},
\epsilon_{\mathrm{mem}}$ and a fixed overhead $E_0\ge 0$,
\[
\min\{\epsilon_{\mathrm{mem}}\,M,\;\epsilon_{\mathrm{comp}}\,C\}
\;\lesssim\;
E
\;\lesssim\;
\epsilon_{\mathrm{comp}}\,C+\epsilon_{\mathrm{mem}}\,M+E_0.
\]
\end{theorem}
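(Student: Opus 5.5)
The plan is to obtain the two sides of the band separately. The upper bound comes from the additive model in Eq.~\eqref{eq:energy-decomp-main}, and the lower bound from the roofline bounds of Proposition~\ref{prop:roofline}. The $\lesssim$ symbols are read as inequalities holding up to device-dependent multiplicative constants, uniformly over the hyperparameter range in assumption (iii).

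\textbf{Upper bound.} Under assumption (i), the active-step power is quasi-stationary. We can therefore write $E=\int_0^T P(t)\,dt+E_0$ and attribute the dynamic part to two sources: energy spent per executed FLOP and energy spent per byte moved. Assumption (ii), the stable op-mix, makes these per-unit costs effectively constant in the hyperparameters. Bounding each per-unit cost by its supremum $\epsilon_{\mathrm{comp}}$ or $\epsilon_{\mathrm{mem}}$ over the studied range gives
\[
E\le \epsilon_{\mathrm{comp}}C+\epsilon_{\mathrm{mem}}M+E_0 .
\]
This is the decomposition \eqref{eq:energy-decomp-main} read as an inequality. Any overlap of compute and memory activity only decreases $E$ relative to the sum, so the bound holds without an exact additivity assumption.

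\textbf{Lower bound.} We invoke Proposition~\ref{prop:roofline}, which gives $E\ge (P_{\mathrm{dyn}}/F_{\mathrm{peak}})\,C$ and $E\ge (P_{\mathrm{dyn}}/\tau_{\mathrm{mem}})\,M$. Hence
\[
E\ge \max\{(P_{\mathrm{dyn}}/F_{\mathrm{peak}})C,\;(P_{\mathrm{dyn}}/\tau_{\mathrm{mem}})M\}\ge \min\{\cdot,\cdot\}.
\]
By (i) and (ii), $P_{\mathrm{dyn}}$ and $\tau_{\mathrm{mem}}$ are bounded above and below by positive constants over the range. So $P_{\mathrm{dyn}}/F_{\mathrm{peak}}$ is comparable to $\epsilon_{\mathrm{comp}}$ and $P_{\mathrm{dyn}}/\tau_{\mathrm{mem}}$ is comparable to $\epsilon_{\mathrm{mem}}$, each up to a constant factor. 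Absorbing these factors into $\lesssim$ yields the stated lower bound $\min\{\epsilon_{\mathrm{mem}}M,\epsilon_{\mathrm{comp}}C\}\lesssim E$. The bound via the max is in fact stronger than the one stated.

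\textbf{Role of assumption (iii) and the main obstacle.} We use (iii) to show the band is a genuine scaling band in $C$. Substituting $k_1C^\beta\le M\le k_2C^\beta$, both sides become functions of $C$ alone:
\[
\min\{\epsilon_{\mathrm{mem}}k_1C^\beta,\ \epsilon_{\mathrm{comp}}C\}\lesssim E\lesssim \epsilon_{\mathrm{comp}}C+\epsilon_{\mathrm{mem}}k_2C^\beta+E_0 .
\]
Because $\beta\in[\tfrac12,1)$, the upper bound is dominated by $C$ for large $C$ and by the $C^\beta$ term for small $C$, which gives the interpretation of compute-bound versus memory-bound regimes.

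The step I expect to be delicate is identifying the roofline coefficients $P_{\mathrm{dyn}}/F_{\mathrm{peak}}$ and $P_{\mathrm{dyn}}/\tau_{\mathrm{mem}}$ with the regression coefficients $\epsilon_{\mathrm{comp}}$ and $\epsilon_{\mathrm{mem}}$. This requires uniform two-sided bounds on $P_{\mathrm{dyn}}$ and on the effective service rate $\tau_{\mathrm{mem}}$. I would justify these directly from (i) and (ii) and state explicitly that the implied constants depend on the device and on the hyperparameter range.
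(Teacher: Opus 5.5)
Your proposal is correct and follows essentially the same route as the paper's proof. The upper bound is the additive decomposition~\eqref{eq:energy-decomp-main} read as an inequality, with $M\le k_2C^{\beta}$ then substituted; the lower bound combines the two roofline bounds of Proposition~\ref{prop:roofline} via $\max\ge\min$, with $M\ge k_1C^{\beta}$ then substituted, and the paper simply treats $a'=P_{\mathrm{dyn}}/F_{\mathrm{peak}}$ and $b'=k_1P_{\mathrm{dyn}}/\tau_{\mathrm{mem}}$ as positive constants absorbed into $\lesssim$.

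The step you flag as delicate is easier than you expect. A uniform positive lower bound on $P_{\mathrm{dyn}}$ suffices, because $\min\{\epsilon_{\mathrm{mem}}M,\epsilon_{\mathrm{comp}}C\}\le\epsilon_{\mathrm{comp}}C\le(\epsilon_{\mathrm{comp}}F_{\mathrm{peak}}/P_{\mathrm{dyn}})\,E$ already gives the stated lower bound, so you do not need to match the roofline coefficients to $\epsilon_{\mathrm{comp}},\epsilon_{\mathrm{mem}}$ from both sides.
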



\begin{proof}[Proof sketch]
See Appendix~\ref{app: energy} for details.
\end{proof}
\vspace{-0.5cm}

\paragraph{Estimated compute--memory energy model.}
While Eq.~\eqref{eq:energy-decomp-main} captures the physical contributions of
compute and memory to energy consumption, the strategy-dependent offsets in
Appendix~\ref{app:comp-mem} indicate that parallelization alters energy beyond what
can be explained by $C$ and $M$ alone. In particular, tensor and sharded data
parallelism introduce systematic energy effects that are not fully absorbed into
per-FLOP or per-byte costs. This motivates an explicit hardware-efficiency factor,
modeled in the next subsection via execution-time speedup.

\input{speedUp}

\subsection{Estimation and Identifiability}
\label{subsec:estimation}

Theoretical bounds from Theorem~\ref{thm:bands} imply that, in regimes where the
baseline offset $E_0$ is negligible, the effective log--log slope of energy with
respect to compute lies between the memory-dominated and compute-dominated limits.
This motivates the reduced-form scaling law
\begin{equation}
\label{eq:reduced-form-main}
E = \kappa \cdot C^{\alpha_C}\, M^{\alpha_M}\, \eta_h^{\alpha_\eta},
\hspace{0.5cm}
\tfrac12 < \alpha_C \le 1,
\quad
\alpha_\eta < 0.
\end{equation}

Here, $C$ denotes total compute (FLOPs), $M$ is a proxy for memory traffic, and
$\eta_h$ is a hardware-efficiency factor derived from empirical speedup. We stress
that $\eta_h$ is an \emph{efficiency proxy}, not a direct physical measurement: it
summarizes the combined effects of communication, synchronization, and execution
inefficiency under a given hardware class, interconnect topology, and parallel
strategy. The reduced-form model can thus be viewed as a compact representation of
the additive compute--memory decomposition in
Eq.~\eqref{eq:energy-decomp-main}, augmented with a multiplicative efficiency
correction.
We estimate both the mechanistic decomposition in
Eq.~\eqref{eq:energy-decomp-main} and the reduced form in
Eq.~\eqref{eq:reduced-form-main} using constrained log-linear regression with
architecture- and strategy-specific fixed effects. We do \emph{not} assume strict
independence between compute, memory, and hardware efficiency; rather, they are
treated as correlated predictors in an empirical scaling model. To assess stability,
we report model diagnostics and robustness analyses in Appendix~\ref{app:diag}, including checks
of linearity and functional form, residual behavior, feature-specific bias, error
structure and normality, influence, and numerical stability.

%% file: speedUp.tex
\subsection{Hardware Efficiency via Empirical Speedup Models}
\label{subsec:hef}

The strategy-dependent energy offsets observed in
Section~\ref{subsec:comp-mem} motivate modeling parallel efficiency explicitly.
Rather than modeling communication bandwidth, collective latency, or microbatch-level
synchronization costs in detail, we represent hardware efficiency through
\emph{execution-time speedup}. This yields a compact, dimensionless proxy that
captures the combined effects of communication, synchronization, kernel
fragmentation, and utilization under a fixed hardware class and operating regime.

Let $T(N)$ denote the execution time per training step (or epoch) when using $N$
GPUs under a fixed workload. The corresponding speedup is
\begin{equation}
S(N) \triangleq \frac{T(1)}{T(N)}.
\end{equation}
Because execution time is often easier to obtain and reproduce than direct
multi-GPU energy measurements, we use $S(N)$ as the basis for an
energy-relevant efficiency proxy. We emphasize that this proxy is not a direct
physical measurement of efficiency; rather, it summarizes how parallel execution
modifies the energy cost of a fixed workload.

\paragraph{Definition.}
We restrict attention to two parallelism modes: \emph{tensor parallelism} (degree
$t$) and \emph{data parallelism} (degree $d$). In our setup, $t$ is the number of
GPUs that jointly compute shards of each layer's linear algebra (typically within a
node), while $d$ is the number of GPUs that process distinct mini-batches and
periodically synchronize gradients. Let $N=t\,d$ denote the total number of devices
and $g$ the number of GPUs per node. We factor
\begin{equation}
\label{eq:speedup}
\phi(t,d)
\;=\;
\underbrace{\eta_{\mathrm{TP}}(t)}_{\text{tensor-parallel efficiency}}
\cdot
\underbrace{\eta_{\mathrm{DP}}(d)}_{\text{data-parallel efficiency}},
\end{equation}
with $t\in\{1,2,4,\ldots,g\}$ and $d\in\{1,2,\ldots,\lfloor g/t\rfloor\}$ in the
single-node regime. Here, $\phi\in(0,1]$ accounts for the effective efficiency loss
induced by tensor-parallel (TP) and data-parallel (DP/FSDP) execution. Any effects
of interconnect topology (e.g., NVLink vs.\ PCIe), synchronization pattern, or
collective implementation are absorbed into the fitted efficiency curves below.

\paragraph{Tensor-parallel efficiency.}
Tensor parallelism introduces per-layer collective communication and can reduce
kernel efficiency as the tensor-parallel degree $t$ increases. Empirically, TP
exhibits rapid initial gains followed by saturation once communication and kernel
fragmentation dominate. We therefore model TP speedup using a saturating exponential
law:
\begin{equation}
\label{eq:exp_decay}
S_{\mathrm{TP}}(t)
=
S_{\max}\bigl(1 - e^{-k t}\bigr),
\end{equation}
where $S_{\max}$ is the asymptotic speedup limit and $k>0$ controls the rate of
saturation. The corresponding tensor-parallel efficiency is
\begin{equation}
\eta_{\mathrm{TP}}(t)
=
\frac{S_{\mathrm{TP}}(t)}{t}.
\end{equation}
This parameterization captures diminishing returns with increasing TP degree
without introducing explicit bandwidth or latency terms.

\paragraph{Data-parallel efficiency.}
Data parallelism incurs gradient synchronization once per global batch, with cost
depending on communication volume and overlap. Following empirical observations from
large-scale distributed training benchmarks, we model DP speedup using a generalized
Amdahl-like law:
\begin{equation}
\label{eq:Amdahl}
S_{\mathrm{DP}}(d)
=
\frac{d}{1 + \alpha \cdot d^{\beta}},
\end{equation}
where $\alpha\ge 0$ captures the effective communication- or serial-bound fraction
and $\beta\ge 0$ allows synchronization overhead to grow nonlinearly with scale.
The corresponding data-parallel efficiency is
\begin{equation}
\eta_{\mathrm{DP}}(d)
=
\frac{S_{\mathrm{DP}}(d)}{d}
=
\frac{1}{1 + \alpha \cdot d^{\beta}}.
\end{equation}
This model captures the observed sublinear scaling of FSDP/DP and explains the
efficiency degradation at larger device counts.

\textbf{Empirical estimation of speedup.}
Figure~\ref{fig:speedup_fits} illustrates the empirical scalability of
data-parallel (FSDP/DP) and tensor-parallel (TP) training as a function of GPU
count \cite{nichols2022survey}. For data parallelism, measured speedup is well
captured by Eq.~\eqref{eq:Amdahl}, with fitted parameters $\alpha=0.005$ and
$\beta=1.266$. The exponent $\beta>1$ indicates that synchronization and
communication overheads grow faster than linearly at scale, leading to diminishing
returns as the number of devices increases.

In contrast, tensor parallelism exhibits rapidly saturating speedup that is well
approximated by Eq.~\eqref{eq:exp_decay}, with $S_{\max}\approx 9.8$ and
$k\approx 0.76$. This reflects strong initial gains from partitioning within-layer
computation, followed by saturation once collective communication and kernel
fragmentation dominate. TP therefore appears most effective for relatively small
device counts, particularly in intra-node settings.

\begin{lemma}[From speedup to energy efficiency]
\label{lem:speedup_to_energy}
Consider a fixed training workload executed on $N$ devices, and let $T(N)$ denote
the corresponding execution time. Let $P(t)$ be the instantaneous power draw and
$E(N)=\int_0^{T(N)} P(t)\,dt$ the total energy consumed. Assume that:
\vspace{-0.4cm}
\begin{enumerate}
    \item[(A1)] The workload is fixed across device counts, i.e., the total amount
    of computation and memory traffic does not change with $N$.
    \item[(A2)] The average power draw
    \[
    \bar{P}(N)\triangleq \frac{1}{T(N)}\int_0^{T(N)}P(t)\,dt
    \]
    varies slowly with $N$ within a fixed hardware class and operating regime.
    \item[(A3)] Changes in execution time dominate changes in average power as $N$
    varies.
\end{enumerate}
Then energy scaling with respect to $N$ is primarily governed by execution-time
scaling, and the parallel speedup $S(N)=T(1)/T(N)$ induces a corresponding
hardware-efficiency proxy
\[
\eta_h(N)\;\propto\;\frac{S(N)}{N}.
\]
In particular, any parametric model that accurately captures $T(N)$ (and hence
$S(N)$) provides a consistent proxy for the dependence of energy consumption on
parallelization within the studied regime.
\end{lemma}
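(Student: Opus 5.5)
The argument rests on the identity $E(N)=\bar{P}(N)\,T(N)$, which follows directly from the definition of $\bar{P}(N)$ in (A2). We will write energy relative to the single-device baseline and then read off the efficiency proxy.

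\textbf{Step 1 (energy ratio).} Dividing the identity at $N$ by the identity at $N=1$ gives
\[
\frac{E(N)}{E(1)}=\frac{\bar{P}(N)}{\bar{P}(1)}\cdot\frac{T(N)}{T(1)}=\frac{\bar{P}(N)}{\bar{P}(1)}\cdot\frac{1}{S(N)}.
\]
Assumption (A1) makes this comparison meaningful. Because the workload is the same for every $N$, the ratio isolates the effect of parallelization rather than changes in $C$ or $M$ from Eq.~\eqref{eq:energy-decomp-main}.

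\textbf{Step 2 (power normalization).} Here $\bar{P}(N)$ is the aggregate power over all devices. We would write $\bar{P}(N)=N\,\bar{p}(N)$, where $\bar{p}(N)$ is the average per-device power. Assumption (A2) is then interpreted as saying that $\bar{p}(N)$ varies slowly, so that $\bar{p}(N)=\bar{p}(1)\,(1+\delta(N))$ with $\delta(N)$ small over the studied range. Substituting into Step 1 gives
\[
\frac{E(N)}{E(1)}=\frac{N}{S(N)}\,(1+\delta(N)).
\]
Taking logarithms,
\[
\log E(N)-\log E(1)=-\log\frac{S(N)}{N}+\log(1+\delta(N)).
\]
Assumption (A3) states that the variation of the first term in $N$ dominates that of the second. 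Energy scaling is therefore governed by execution-time scaling through $S(N)/N$.

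\textbf{Step 3 (efficiency proxy).} We define $\eta_h(N)$ as normalized energy efficiency, namely work per joule relative to the baseline, $E(1)/E(N)$. Up to the $(1+\delta)$ factor, which is absorbed into the proportionality constant, this equals $S(N)/N$. That yields $\eta_h(N)\propto S(N)/N$.

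\textbf{Step 4 (parametric models).} Suppose a parametric model $\hat T(N)$ satisfies $\hat T(N)/T(N)=1+o(1)$. Then $\hat S(N)/N$ differs from $S(N)/N$ only by a multiplicative factor close to one, and the same holds for the energy ratio. The models in Eqs.~\eqref{eq:exp_decay} and~\eqref{eq:Amdahl} therefore give consistent proxies. For these models, $\eta_{\mathrm{TP}}$ and $\eta_{\mathrm{DP}}$ are exactly $S/N$ in the form above, and the factorization in Eq.~\eqref{eq:speedup} then follows, with $\phi(t,d)=\eta_{\mathrm{TP}}(t)\,\eta_{\mathrm{DP}}(d)$ and $N=td$.

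\textbf{Main obstacle.} The hard part is making (A2)--(A3) precise enough to support ``$\propto$''. As stated, these assumptions are qualitative. If (A2) referred to total power, proportionality to $S(N)/N$ would fail, since one would get $1/S(N)$ instead. My first choice is to state explicitly the per-device interpretation of Step 2 and to formalize (A3) as
\[
\sup_N\bigl|\log(1+\delta(N))\bigr|\ll\sup_N\bigl|\log(S(N)/N)\bigr|.
\]
The lemma would then hold in the approximate, log-scale sense that the paper intends.
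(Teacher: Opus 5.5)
Your argument is sound under the reading you adopt, but that reading is where you depart from the paper. The paper applies (A2) to the total average power. It bounds $\bar{P}(N)/\bar{P}(1)$ in $[1/c,c]$, formalizes (A3) as $|d\log\bar{P}/d\log N|\ll|d\log T/d\log N|$, and concludes $E(N)\propto T(N)$, i.e.\ $E(N)\approx E(1)/S(N)$. The factor $1/N$ enters only afterwards. The paper introduces the ideal time $T_{\mathrm{ideal}}(N)=T(1)/N$ and the parallel efficiency $\eta_{\mathrm{par}}=T_{\mathrm{ideal}}/T=S(N)/N$, and measures energy against ideal $1/N$ scaling. So, contrary to your remark, the total-power reading does not make the conclusion fail; it only changes what $\eta_h$ means.

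You instead hold per-device power fixed. This gives $E(N)/E(1)=(N/S(N))(1+\delta(N))$ and makes $S(N)/N$ literally the relative work per joule $E(1)/E(N)$. Your route has three advantages:
\begin{itemize}
\item it turns the identification of $\eta_h$ with energy efficiency into a derivation rather than a definition;
\item it is consistent with the fitted $\hat{\alpha}_\eta=-1.06$;
\item it agrees with the paper's observation that energy rises with GPU count, whereas the total-power reading predicts $E(N)\approx E(1)/S(N)<E(1)$.
\end{itemize}

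The price is that this is a different hypothesis, not a rephrasing. Under per-device constancy, $d\log\bar{P}/d\log N\approx 1$, so the paper's elasticity form of (A3) no longer holds. Your $\delta(N)$ must also absorb any $N$-independent host power. State the per-device version of (A2) explicitly as your assumption.

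Finally, drop the claim at the end of your Step 4 that the factorization in Eq.~\eqref{eq:speedup} follows. It requires the separate modeling assumption $S(t,d)=S_{\mathrm{TP}}(t)\,S_{\mathrm{DP}}(d)$. Likewise, whether Eqs.~\eqref{eq:exp_decay} and~\eqref{eq:Amdahl} track $T(N)$ accurately is an empirical question, not something your argument establishes.
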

\vspace{-0.5cm}
\begin{proof}[Proof sketch]
See Appendix~\ref{proof:speedup_to_energy} for details.
\end{proof}

\vspace{-0.5cm}

\begin{figure}[t]
\centering
\includegraphics[width=\columnwidth]{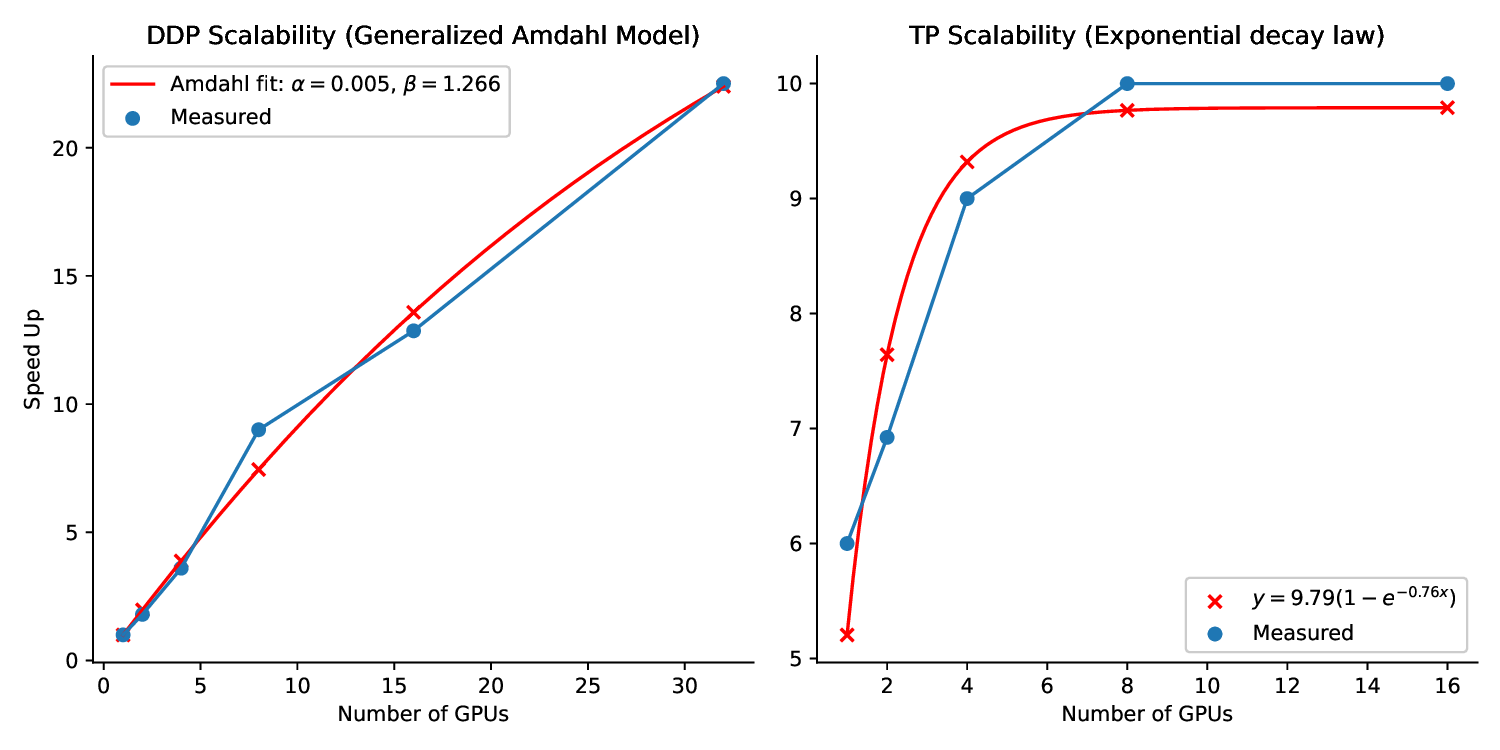}
\vspace{-20pt}
\caption{
Execution-time speedup as a function of GPU count for data-parallel
(FSDP/DP, left) and tensor-parallel (TP, right) training.
Points show measured speedups; curves denote fitted models.
The gap from ideal linear scaling motivates explicit modeling of
parallel efficiency in energy estimation.
}
\label{fig:speedup_fits}
\vspace{-10pt}
\end{figure}

%% file: preModel.tex
\section{Energy Scaling with Hyperparameters}
\label{sec:results}

We evaluate how training energy scales with compute, hardware resources, and
parallelization strategy. The experimental platform consists of a single 8-GPU node with GeForce RTX 2080 Ti accelerators. All energies are averaged over epochs and reported in
kilowatt-hours (kWh).
FLOP counts correspond to a full training step
(forward and backward passes) unless otherwise stated.\\
\textbf{Energy vs.\ compute.}
Figure~\ref{fig:energyVsFlops} (in Appendix~\ref{subsec:energy-flops}) shows that training energy increases monotonically
with total compute (FLOPs) but does so sub-linearly. The large dispersion at fixed
compute confirms that FLOPs alone are insufficient to predict energy consumption.
Across all regimes, fully sharded data parallelism (FSDP) consistently consumes
less energy than tensor parallelism (TP), with the gap widening at higher compute.

\begin{figure}[ht]
    \centering
    \includegraphics[width=\columnwidth]{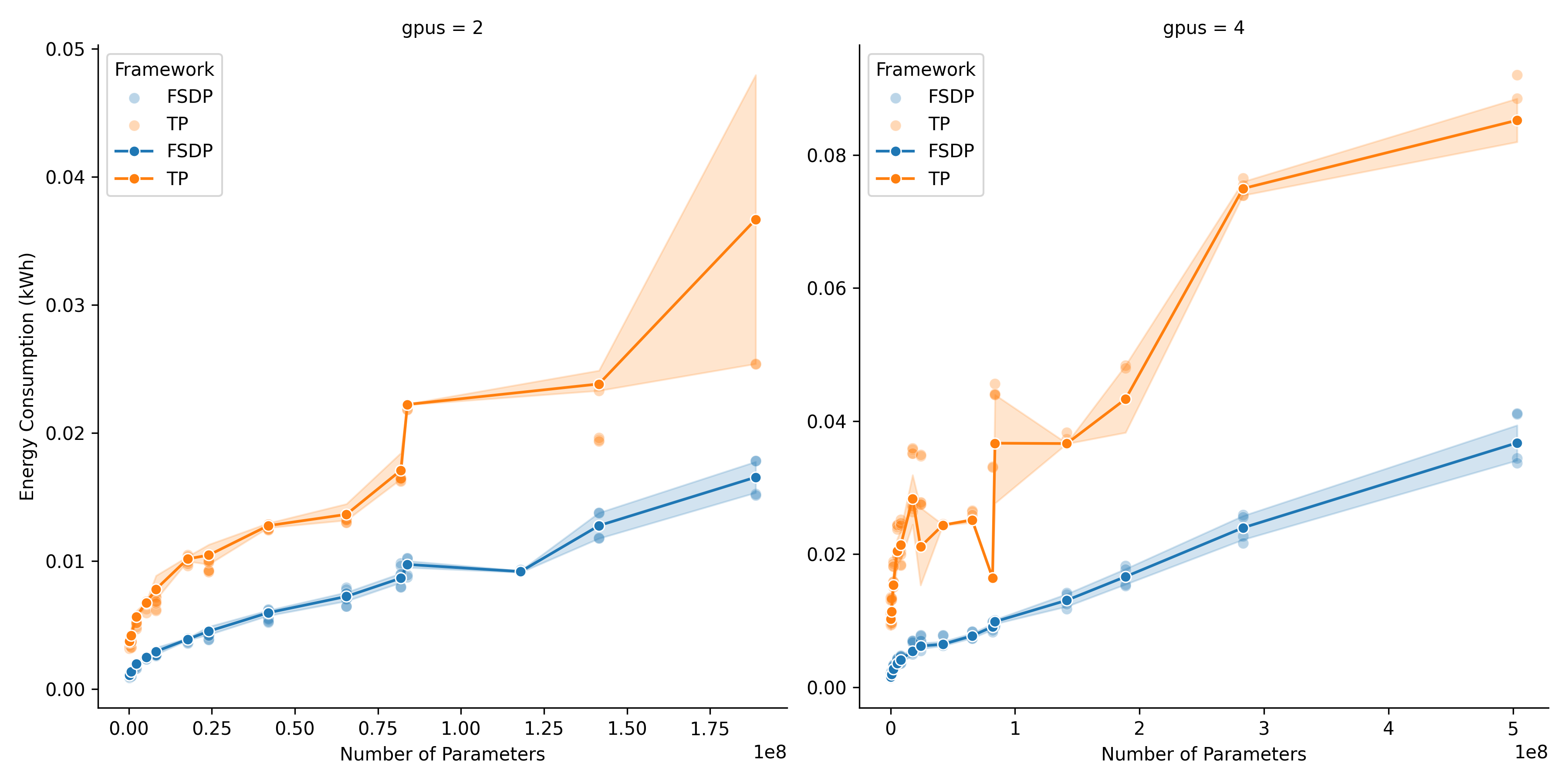}
    \vspace{-20pt}
    \caption{
    Training energy consumption as a function of model size for tensor parallelism (TP) and fully sharded data parallelism (FSDP), shown
    separately for $2$ and $4$ GPUs. 
    }
    \label{fig:energyVsParams}
\end{figure}

\vspace{-10pt}

\textbf{Energy vs.\ model size.}
Figure~\ref{fig:energyVsParams} shows that energy increases with model size
(number of parameters), but the rate depends strongly on the parallelization
strategy. FSDP scales more favorably than TP, particularly for larger models.
Depth-specific analysis are reported in Appendix~\ref{subsec:energy-layers}.

Overall, training energy is governed by a three-way interaction between compute,
parallelization strategy, and hardware efficiency. These results motivate the
energy model introduced in Section~\ref{sec:method}, which explicitly accounts for
hardware efficiency via $\eta_h$.

\textbf{Energy vs.\ number of GPUs.}
As shown in Figure~\ref{fig:energyVsGpus}, increasing the number of GPUs raises total
energy consumption despite reducing wall-clock time. This reflects higher
instantaneous power draw and communication overheads that are not fully offset by
speedup. FSDP achieves lower energy than TP at the same device count, while TP
exhibits greater variance.
\begin{figure}[ht]
    \centering
    \includegraphics[width=\columnwidth]{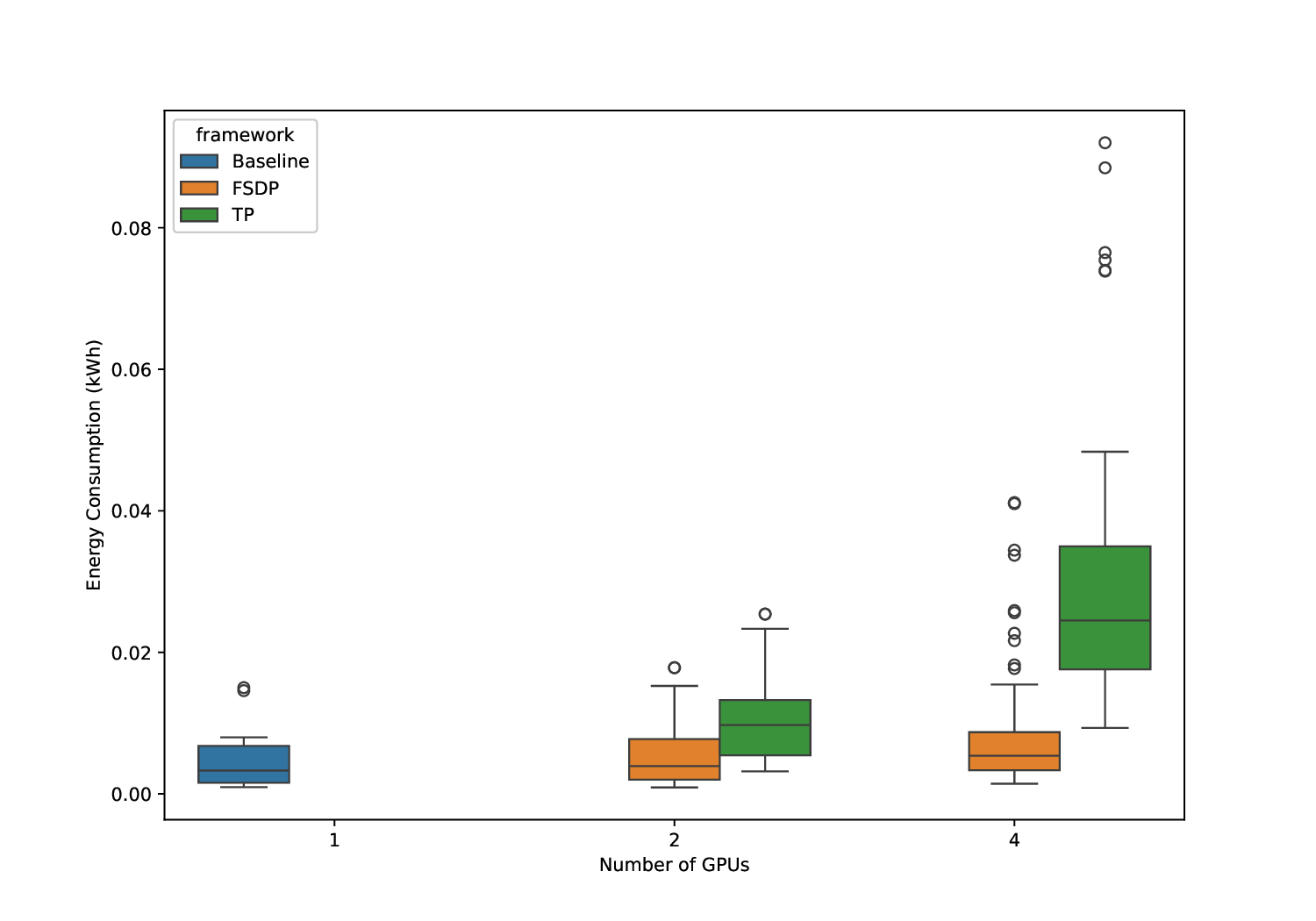}
    \vspace{-20pt}
    \caption{Training energy consumption as a function of the number of GPUs, shown for the single-GPU baseline, fully sharded data parallelism
    (FSDP) and tensor parallelism (TP). 
    }
    \label{fig:energyVsGpus}
    \vspace{-10pt}
\end{figure}

\textbf{Energy vs.\ training duration.}
Figure~\ref{fig:energyVsDuration} in Appendix highlights a clear decoupling between runtime and
energy. Faster configurations often consume more energy, demonstrating that
minimizing time-to-solution does not necessarily minimize energy usage. This
motivates explicitly modeling hardware efficiency rather than relying on runtime
as a proxy.

%% file: results.tex
\section{Energy Model Estimation}
\label{subsec:estimation_results}

\subsection{Implementation Details}
All regressions are performed in log space using ordinary least squares with
heteroskedasticity-robust (HC3) standard errors. The reduced-form model in
Eq.~\eqref{eq:reduced-form-main} is estimated with explicit regressors for compute,
memory traffic, and hardware efficiency. Parallelization strategy effects are
modeled using categorical fixed effects, with single-GPU baseline runs serving as
the reference category.
\vspace{-0.4cm}

\paragraph{Train--validation protocol.}
Configurations are stratified by architecture and randomly partitioned into a
70/30 train--validation split. This procedure is repeated over five random seeds,
and we report medians and interquartile ranges (IQR) for all metrics. Model fitting
is performed exclusively on the training subset; validation results are strictly
out of sample.
\vspace{-0.4cm}

\paragraph{Primary metrics.}
We report (i) out-of-sample $R^2$ ($R^2$), (ii) RMSE, and (iii) calibration slope and intercept obtained by regressing observed energy on predicted energy in the validation set. We further
compute 95\% prediction intervals using a residual bootstrap on the training data
and evaluate empirical coverage on held-out configurations.

\subsection{Final Estimated Energy Model}
\label{subsec:final-model}

Unless otherwise stated, all reported results use the reduced-form model
Eq.~\eqref{eq:reduced-form-main}, with the hardware-efficiency term $\eta_h$
parameterized via empirical speedup models (Eq.~\eqref{eq:speedup}). Estimated
coefficients and confidence intervals are summarized in
Table~\ref{tab:exponents} in Appendix~\ref{app:final-model}. The model achieves an adjusted coefficient of
determination $R^2_{\mathrm{adj}} = 0.853$ over 410 independent configurations,
indicating strong explanatory power across heterogeneous architectures and
parallelization strategies.
As illustrated in Figure~\ref{fig:actual_vs_predicted}, the model demonstrates high predictive fidelity across three orders of magnitude of energy consumption ($10^{-3}$ to $10^{-1}$ kWh). The observations are tightly clustered around the 1:1 perfect prediction line, confirming that the log-linear formulation effectively captures the underlying physical energy dynamics.

The final estimated energy model for training is
\begin{equation}
\label{trans_eq_final}
E
\;\approx\;
e^{-10.52}
\;\cdot\;
C^{0.20}
\;\cdot\;
M^{0.10}
\;\cdot\;
\eta_h^{-1.06}
\;\cdot\;
\Psi
\end{equation}
where $C$ denotes total compute (FLOPs), $M$ is the memory-traffic proxy, $\eta_h$ is the hardware-efficiency factor derived from speedup, and
$\Psi$ is a strategy-dependent constant.

\begin{figure}[ht]
\centering
\includegraphics[width=\columnwidth]{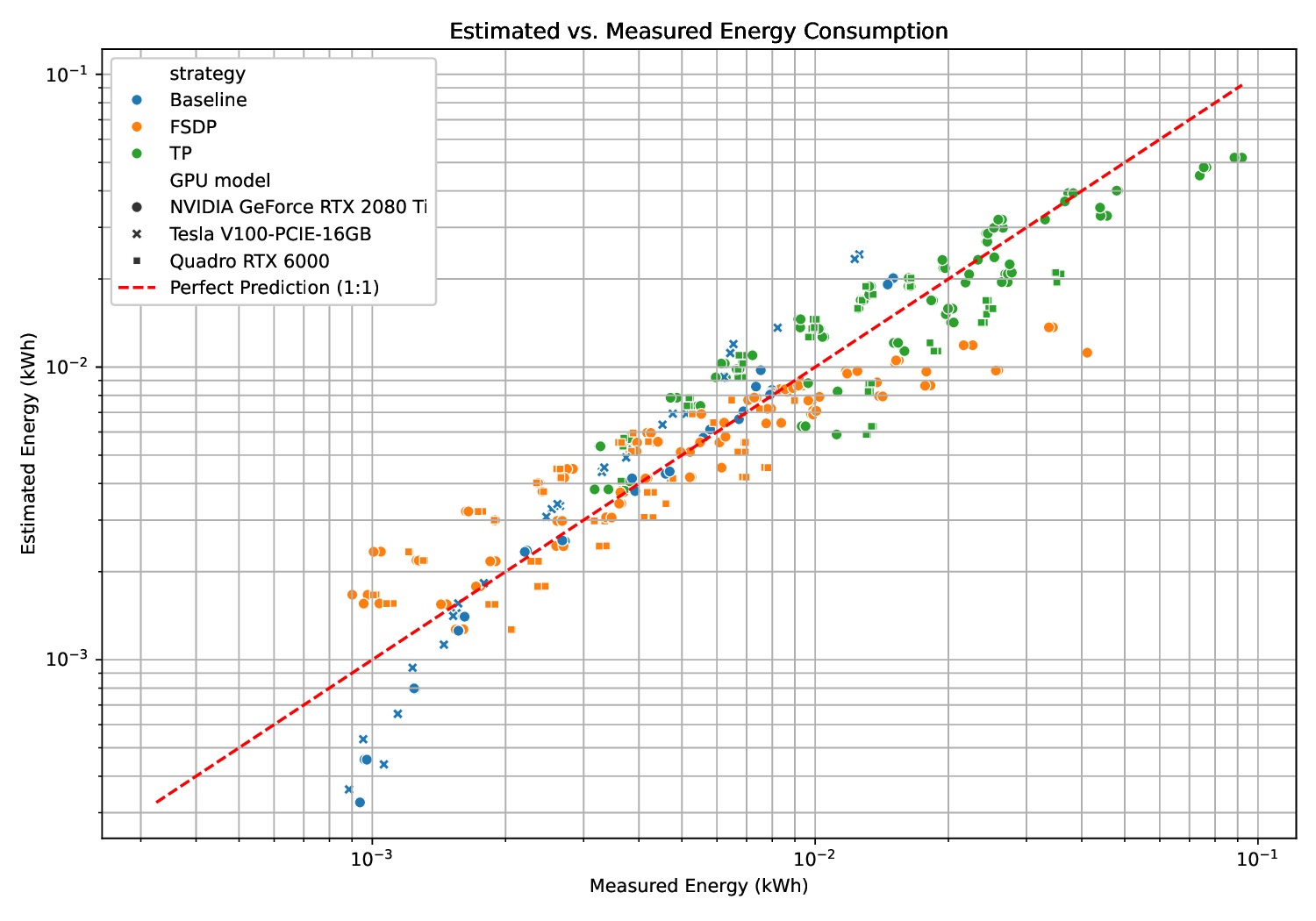}    
\vspace{-20pt}
\caption{Estimated vs. Measured energy consumption in kWh. The dashed red line represents the 1:1 ideal prediction. The model maintains high accuracy across diverse GPU architectures and distributed strategies, successfully capturing the scaling behavior of the system ($R^2_{\mathrm{adj}} = 0.853$).}
\label{fig:actual_vs_predicted}
\vspace{-15pt}
\end{figure}

\subsection{Interpretation of Energy Scaling Results}
\label{subsec:results-interpretation}

We interpret the estimated coefficients of the energy model in terms of their implications for increases or decreases in training energy. Because the model is specified in log space, coefficients correspond to elasticities or multiplicative
effects and can be interpreted quantitatively, conditional on a fixed workload and hardware class.

\textbf{Effect of compute.}
The estimated elasticity of energy with respect to total compute is
$\hat{\alpha}_C = 0.20$, indicating strongly sub-linear scaling. A 10\% increase in compute leads to an approximate 2\% increase in energy, while doubling compute increases energy by only about 15\%. This behavior suggests that larger workloads benefit from improved utilization and amortization of fixed overheads, consistent
with roofline-style efficiency effects.

\textbf{Effect of memory traffic.}
The coefficient on the memory-traffic proxy is positive but modest
($\hat{\alpha}_M = 0.10$). Doubling memory traffic increases energy consumption by
approximately 7\%, indicating that memory intensity contributes to energy use, but to a lesser extent than compute or efficiency effects in the regimes studied. This is consistent with partially memory-bound execution on modern GPUs.

\textbf{Effect of hardware efficiency.}
Hardware efficiency has the strongest impact on energy consumption. The coefficient is $\hat{\alpha}_{\eta} = -1.06$, implying that improvements in
parallel efficiency translate almost directly into energy savings. A 10\%
increase in efficiency reduces energy by approximately 10\%, while a twofold increase reduces energy by about 52\%. This result empirically confirms the central role of execution-time scaling in determining training energy.

\textbf{Effect of parallelization strategy.}
Parallelization strategy introduces substantial fixed effects beyond those captured
by compute, memory traffic, and efficiency. Relative to the single-GPU baseline,
both fully sharded data parallelism (FSDP) and tensor parallelism (TP) are associated
with large positive multiplicative shifts
($\Psi=1$ for the baseline, $\Psi\approx e^{5.38}$ for FSDP, and $\Psi\approx e^{7.65}$
for TP), reflecting additional communication and coordination overheads inherent to
distributed execution. Parallelism therefore reduces energy only when efficiency
gains are sufficient to offset these strategy-specific costs.

Overall, the results show that training energy is far more sensitive to hardware efficiency than to raw increases in compute or memory traffic. While larger models and workloads do increase energy consumption, their impact is sub-linear and can be mitigated by effective parallelization. In contrast, poorly utilized parallel execution can substantially increase energy despite reduced time-to-solution, underscoring the importance of jointly modeling compute, memory, and efficiency when evaluating the energy cost of training.

%% file: validation.tex
\section{Validation, Diagnostics, and Ablations}
\label{sec:validation}
\paragraph{No-$\eta_h$ baseline.}
To quantify the value of modeling hardware efficiency, we compare a baseline power-law model,
$E=\kappa,C^{\alpha_C}M^{\alpha_M}$, against the proposed full model that augments compute and memory with a speedup-derived hardware-efficiency term and architecture effects. Including $\eta_h$ yields a clear improvement in predictive
accuracy (Appendix Table~\ref{tab:model_comparison}): $R^2$ increases from
$0.7768$ to $0.8543$ and RMSE decreases by 19\%. Calibration remains unchanged (unit slope), indicating reduced variance rather than rescaling.
A likelihood-ratio test strongly favors the full model, showing that a substantial
portion of energy variation arises from implementation-specific throughput and
parallelization effects beyond compute and memory scaling alone.

\vspace{-0.4cm}

\paragraph{Estimation on state-of-the-art models.}
\label{subsec:public-validation}

We estimate training energy for several prominent language models (T5, Meena, GShard, Switch Transformer, and GPT-3) under a fixed-data regime that isolates architectural and parallelization effects. Because the original models were
trained on TPU clusters, these results should be interpreted as order-of-magnitude consistency checks rather than direct validation. Still, the estimates preserve reported scaling trends and relative efficiency differences across strategies, suggesting that the framework captures dominant training-energy drivers in this controlled setting. Results are shown in Table~\ref{tab:energy_comparison}, with
additional details in Appendix~\ref{app:sota-energy}.

\begin{table}[ht]
\centering
\caption{Comparison of reported and estimated energy consumption for SOTA models.}
\label{tab:energy_comparison}
\scriptsize 
\setlength{\tabcolsep}{3pt} 
\resizebox{\columnwidth}{!}{%
\begin{tabular}{llrrrccrrrrr}
\toprule
\textbf{Model} & \textbf{Dev.} & \textbf{Params} & \textbf{FLOPs} & \textbf{Tokens} & \textbf{C\_epoch} & \textbf{M\_proxy} & \textbf{Rep. Energy} & \multicolumn{2}{c}{\textbf{FSDP Est. (MWh)}} & \multicolumn{2}{c}{\textbf{TP Est. (MWh)}} \\
 & & (B) & (ZF) & /Epoch & (FLOPs) & (Proxy) & (MWh) & Epoch. & Tot. & Epoch. & Tot. \\
\midrule
GPT-3  \cite{brown2020language}& OpenAI & 175.0 & 314.0& 1e14 & 1.81e25 & 2.42e23 & 1287.0 & 0.12 & 17.33 & 4.88 & 696.13 \\
GShard \cite{lepikhin2020gshard} & Google & 619.0 & 133.3 & 1e14 & 2.57e23 & 5.03e21 & 24.1 & 0.04 & 1.72 & 1.45 & 69.08 \\
Meena \cite{adiwardana2020towards}& Google & 2.6 & 112.0 & 1e14 & 3.21e22 & 6.29e20 & 232.0 & 0.02 & 6.90 & 0.79 & 276.94 \\
Switch \cite{fedus2022switch}& Google & 1500.0 & 82.2  & 1e14 & 1.71e23 & 3.36e21 & 179.0 & 0.03 & 8.40 & 1.29 & 337.34 \\
T5  \cite{raffel2020exploring}& Google & 11.0 & 40.5 & 1e14 & 3.40e22 & 1.26e21 & 85.7 & 0.02 & 2.65 & 0.85 & 106.47 \\
\bottomrule
\end{tabular}
}
\vspace{-0.4cm}

\end{table}


%% file: Discussion.tex
\section{Discussion}
\label{sec:discussion}

By decomposing training energy into contributions from compute, memory traffic, and
execution efficiency, we explain why energy scaling departs from FLOPs-only
intuition and identify the system-level factors that most strongly influence
training efficiency. Our results show that energy inefficiency arises not only from
compute intensity, but also from memory pressure and parallelization overheads,
motivating system-aware energy models rather than purely model-centric ones.
\vspace{-0.35cm}

\paragraph{Beyond FLOPs and runtime-based scaling.}
Across all experiments, hardware efficiency is the dominant driver of training energy variation. While total compute remains a necessary descriptor of workload
size, its effect on energy is strongly sub-linear once utilization is accounted for. Faster configurations often operate at higher instantaneous power, offsetting runtime reductions and showing that neither FLOPs nor runtime alone is a sufficient proxy for energy cost.
\vspace{-0.35cm}

\paragraph{Parallelism and diminishing returns.}
Parallelism introduces a fundamental trade-off between time-to-solution and energy
efficiency. Although tensor parallelism and fully sharded data parallelism reduce
runtime, they also incur coordination, communication, and synchronization overheads
that can increase total energy unless parallel efficiency remains high. As a
result, scaling to larger device counts can be counterproductive from an energy
perspective if the chosen strategy is poorly matched to model structure, hardware,
or interconnect regime.

\textbf{Architectural scaling and strategy choice.}
Fully sharded data parallelism consistently achieves lower training energy than
tensor parallelism, particularly for larger and deeper models. This suggests that
sharding parameters and optimizer state is often more energy efficient than
partitioning computation across devices. These differences amplify with model
scale, indicating that architectural scaling and parallelization strategy must be
co-designed for energy-efficient training.

\textbf{Limitations.}
Our empirical evaluation is limited to BERT-style Transformer fine-tuning under
controlled architectural sweeps. While the modeling structure is not tied to BERT
specifically, we do not validate it on decoder-only pretraining,
mixture-of-experts models, long-context attention variants, or pipeline-parallel
execution; results on larger public models are therefore presented as trend-level
consistency checks rather than direct validation.

The model also relies on coarse-grained proxies for compute and memory demand,
chosen as minimal sufficient statistics rather than kernel-level attributions.
Accordingly, fitted coefficients and strategy-specific constants should be
interpreted as aggregate effects reflecting communication, synchronization, and
execution overheads. Finally, the speedup energy linkage assumes operation within
a fixed hardware class and operating regime, where average device power varies more
slowly than execution time; it does not account for embodied emissions,
data-center level effects, or heterogeneous power-management behavior.

\textbf{Broader implications and outlook.}
Our findings suggest that reducing the environmental footprint of large-scale
training requires a shift from model-centric metrics toward system-aware
efficiency considerations. Reporting execution time and parallel efficiency
alongside FLOPs and parameter counts would enable more meaningful comparisons
across models and hardware platforms. Future work could extend this framework to
multi-node and heterogeneous settings, incorporate finer-grained communication and
attention-kernel models, and integrate carbon-intensity aware scheduling for more
comprehensive energy-aware system design.

%% file: conclusion.tex

\section{Conclusion}
\label{sec:conclusion}

This work shows that the energy cost of Transformer training is governed not only by
total compute, but also by hardware efficiency and parallelization strategy. While
distributed training reduces time-to-solution, it can increase total energy when parallel scaling is imperfect or coordination overheads dominate. By modeling
training energy as a function of compute, memory traffic, and a speedup-derived
hardware-efficiency term, we move beyond FLOPs-only scaling laws and provide an
interpretable framework for energy prediction.
Our results further show that execution-time scaling can serve as a useful proxy
for energy efficiency within a fixed hardware class and operating regime, enabling
out-of-sample energy estimation even when direct power measurements are unavailable.
Overall, the framework suggests that improving parallel efficiency and utilization
can be as important as reducing nominal compute, offering practical guidance for
the energy-aware design and evaluation of distributed Transformer training systems.

\section*{Impact Statement}
This paper studies the operational energy consumption of Transformer training and proposes an interpretable framework for predicting training energy from compute, memory traffic, and execution efficiency. A potential positive impact is to support more energy-aware design and reporting of machine learning systems, including better comparison of training configurations beyond FLOPs or runtime alone.

At the same time, this work focuses only on operational training energy and does not account for embodied emissions, electricity mix, or downstream inference costs. As a result, it should not be used in isolation as a complete measure of environmental impact. More generally, improved energy prediction could also make large-scale training easier to optimize, which may reduce energy per run without necessarily reducing total energy use. We therefore view this work as a tool for improving transparency in system-level trade-offs rather than as a complete
sustainability assessment.

%% file: results_appendix.tex
\section{ Pre-Modeling Exploratory Data Analysis}

\subsection{Energy Consumption vs.\ Model Size}
\label{subsec:energy-params}
Figure~\ref{fig:energyVsParams} illustrates the relationship between training energy
consumption and model size for Transformer models under different parallelization
strategies and GPU counts. Across all configurations, energy consumption increases
monotonically with the number of parameters, confirming that model size is a
first-order driver of training energy. However, the slope and dispersion of this
increase depend strongly on the parallelization strategy.

For a fixed number of GPUs, FSDP consistently exhibits lower energy consumption than
TP at comparable model sizes. This gap widens as the model grows, indicating that
tensor parallelism incurs increasing overheads such as additional collective
communications and reduced kernel efficiency  that are not fully amortized by larger
models. In contrast, FSDP scales more smoothly with parameter count, suggesting
better preservation of hardware utilization as model size increases.

Increasing the number of GPUs from $2$ to $4$ raises the absolute energy consumption
for both strategies, despite reducing time-to-solution. This highlights a key
trade-off: while additional devices improve throughput, they also increase
instantaneous power draw and communication costs, leading to higher total energy
usage when parallel efficiency is imperfect.

Notably, the energy–parameter relationship is sub-linear on a log–log scale,
consistent with the scaling exponent $\alpha_C<1$ estimated in
Section~\ref{subsec:estimation}. This behavior reflects the combined effects of
roofline-style utilization improvements and the amortization of fixed overheads at
larger model sizes. Overall, the figure underscores that model size alone is
insufficient to predict energy consumption; parallelization strategy and hardware
efficiency play a decisive role in determining the energy cost of training.

\subsection{Energy Consumption vs.\ Model Compute}
\label{subsec:energy-flops}
Figure~\ref{fig:energyVsFlops} shows how total training energy scales with model
compute across parallelization strategies and GPU counts. As expected, energy
consumption increases monotonically with the total number of floating-point
operations, confirming compute as a primary driver of training energy. However,
the relationship is markedly sub-linear and exhibits substantial dispersion,
indicating that FLOPs alone are insufficient to predict energy usage.

For a fixed compute budget, FSDP consistently consumes less energy than TP across
both $2$-GPU and $4$-GPU configurations. This gap becomes more pronounced at higher
compute levels, suggesting that tensor parallelism incurs increasing coordination
and communication overheads that are not fully amortized as compute grows. In
contrast, FSDP better preserves hardware utilization by reducing redundant
parameter storage and communication, leading to improved energy efficiency.

Increasing the number of GPUs from $2$ to $4$ shifts the curves upward for both
strategies, despite reducing wall-clock training time. This highlights a key
system-level trade-off: additional devices improve throughput but increase
instantaneous power draw and synchronization overhead, resulting in higher total
energy when parallel efficiency is imperfect.

Finally, the curvature observed on a log--log scale aligns with the sub-linear
compute elasticity estimated in Section~\ref{subsec:estimation}. This behavior is consistent with roofline-style utilization effects and reinforces the necessity of
explicitly modeling hardware efficiency  via $\eta_h$  to explain energy scaling
beyond raw compute counts.

\begin{figure}[ht]
    \centering
    \includegraphics[width=\columnwidth]{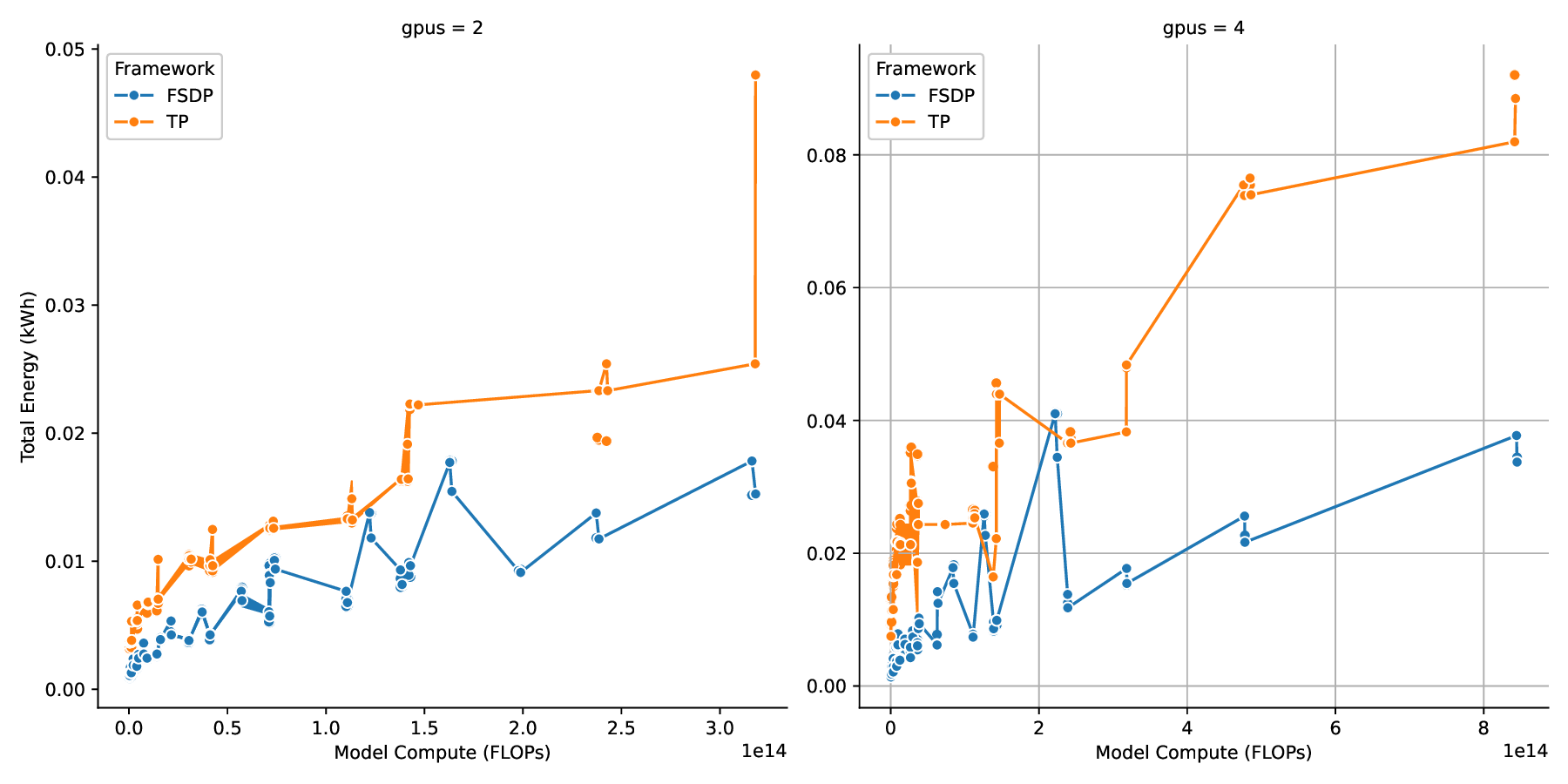}
    \vspace{-20pt}
    \caption{
    Training energy consumption as a function of total model compute (FLOPs) for
    fully sharded data parallelism (FSDP) and tensor parallelism (TP), shown
    separately for $2$ and $4$ GPUs. 
    }
    \label{fig:energyVsFlops}
\end{figure}

\subsection{Energy Consumption vs.\ Training Duration}
\label{subsec:energy-duration}
Figure~\ref{fig:energyVsDuration} compares training duration across baseline,
fully sharded data parallelism (FSDP), and tensor parallelism (TP) as the number of
GPUs increases. As expected, increasing the number of GPUs generally reduces
training duration for both parallel strategies, reflecting improved throughput
and higher aggregate compute capacity.

However, the reduction in duration does not translate proportionally into lower
energy consumption. While FSDP achieves the shortest median durations for $2$ and
$4$ GPUs, TP exhibits substantially longer and more variable runtimes, particularly
at higher GPU counts. This variability indicates sensitivity to communication
patterns and synchronization overheads introduced by tensor parallelism.

Crucially, these results highlight that minimizing time-to-solution is not
equivalent to minimizing energy. Faster configurations often operate at higher
instantaneous power due to increased device utilization and communication
activity, which can offset  or even outweigh  the benefits of reduced runtime. This
decoupling between duration and energy motivates the explicit inclusion of a
hardware-efficiency factor in our energy model, as duration alone cannot capture
the full cost of parallel execution.

Overall, Figure~\ref{fig:energyVsDuration} reinforces the need to jointly consider
runtime, power draw, and parallel efficiency when evaluating the sustainability of
training configurations.
\textbf{Energy vs.\ training duration.}
Figure~\ref{fig:energyVsDuration} highlights a clear decoupling between runtime and
energy. Faster configurations often consume more energy, demonstrating that
minimizing time-to-solution does not necessarily minimize energy usage. This
motivates explicitly modeling hardware efficiency rather than relying on runtime
as a proxy.
\begin{figure}[ht]
    \centering
    \includegraphics[width=\columnwidth]{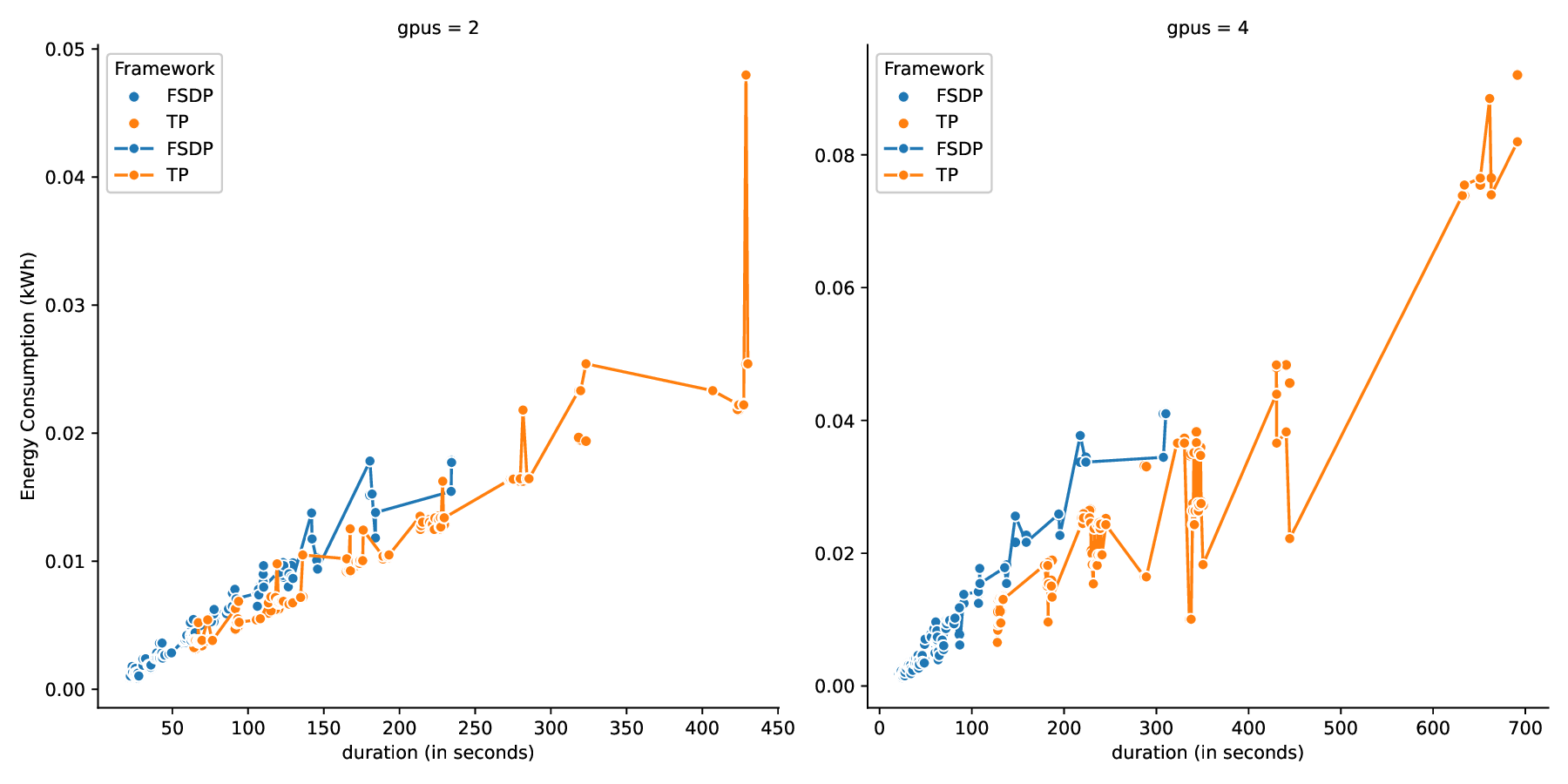}
    \caption{
    Distribution of training duration (seconds per epoch) across parallelization
    strategies and GPU counts. B
    }
    \label{fig:energyVsDuration}
    \vspace{-10pt}
\end{figure}

\subsection{Energy Consumption vs.\ Computational Resources}
\label{subsec:energy-gpus}
Figure~\ref{fig:energyVsGpus} illustrates how total energy consumption varies with
the number of GPUs for different parallelization strategies. Moving from a single
GPU to multiple GPUs increases total energy consumption across all configurations,
even though multi-GPU training substantially reduces wall-clock time. This confirms
that additional computational resources incur higher instantaneous power draw and
coordination overheads that are not fully offset by speedup gains.

Among multi-GPU strategies, FSDP consistently exhibits lower median energy
consumption than tensor parallelism at the same GPU count. This difference becomes
more pronounced at four GPUs, where TP shows both higher medians and substantially
larger variance. The increased dispersion under TP reflects sensitivity to
communication patterns and reduced kernel efficiency as tensors are partitioned
across devices.

Notably, the single-GPU baseline remains the most energy-efficient configuration in
absolute terms for small- and medium-scale models, despite being the slowest in
runtime. This highlights a central trade-off in distributed training: optimizing
for time-to-solution does not necessarily minimize energy consumption.

Overall, these results emphasize that the energy cost of training grows with the
number of devices unless parallel efficiency is sufficiently high. This motivates
the explicit inclusion of a device-count–dependent hardware-efficiency factor
$\eta_h$ in our energy model to capture diminishing returns from additional GPUs.

\subsection{Energy Consumption vs.\ Number of Layers}
\label{subsec:energy-layers}
Figure~\ref{fig:energyVslayersGpus} examines how training energy scales with model
depth under different parallelization strategies and GPU counts. For all
configurations, energy consumption increases monotonically with the number of
layers, confirming depth as a direct driver of training cost through its linear
effect on both parameter count and total compute.

At fixed depth, increasing the number of GPUs leads to higher total energy
consumption across all strategies. Although multi-GPU execution reduces
wall-clock time, the increase in instantaneous power draw and synchronization
overheads outweighs these gains, resulting in higher overall energy usage. This
effect becomes more pronounced as depth increases, reflecting the compounding
impact of deeper models on communication and memory traffic.

Across all depths and GPU counts, FSDP consistently achieves lower energy
consumption than tensor parallelism. The gap between FSDP and TP widens for deeper
models, indicating that TP incurs depth-dependent overheads  such as per-layer
collective operations and reduced kernel efficiency  that scale poorly with model
depth. In contrast, FSDP better amortizes communication costs by sharding
parameters and optimizer states, leading to more stable energy scaling.

Notably, for shallow models (4--6 layers), the single-GPU baseline remains the most
energy-efficient configuration in absolute terms. As model depth increases,
however, the baseline becomes increasingly impractical due to long runtimes,
making multi-GPU strategies necessary despite their higher energy cost. This
highlights a fundamental trade-off between feasibility and energy efficiency in
training deep models.

Overall, Figure~\ref{fig:energyVslayersGpus} reinforces that model depth,
parallelization strategy, and device count interact non-trivially to determine
energy consumption. These results motivate modeling depth-driven compute separately
from hardware efficiency effects, as captured by the $\eta_h$ factor in our energy
model.
\begin{figure}[ht]
    \centering
    \includegraphics[width=\columnwidth]{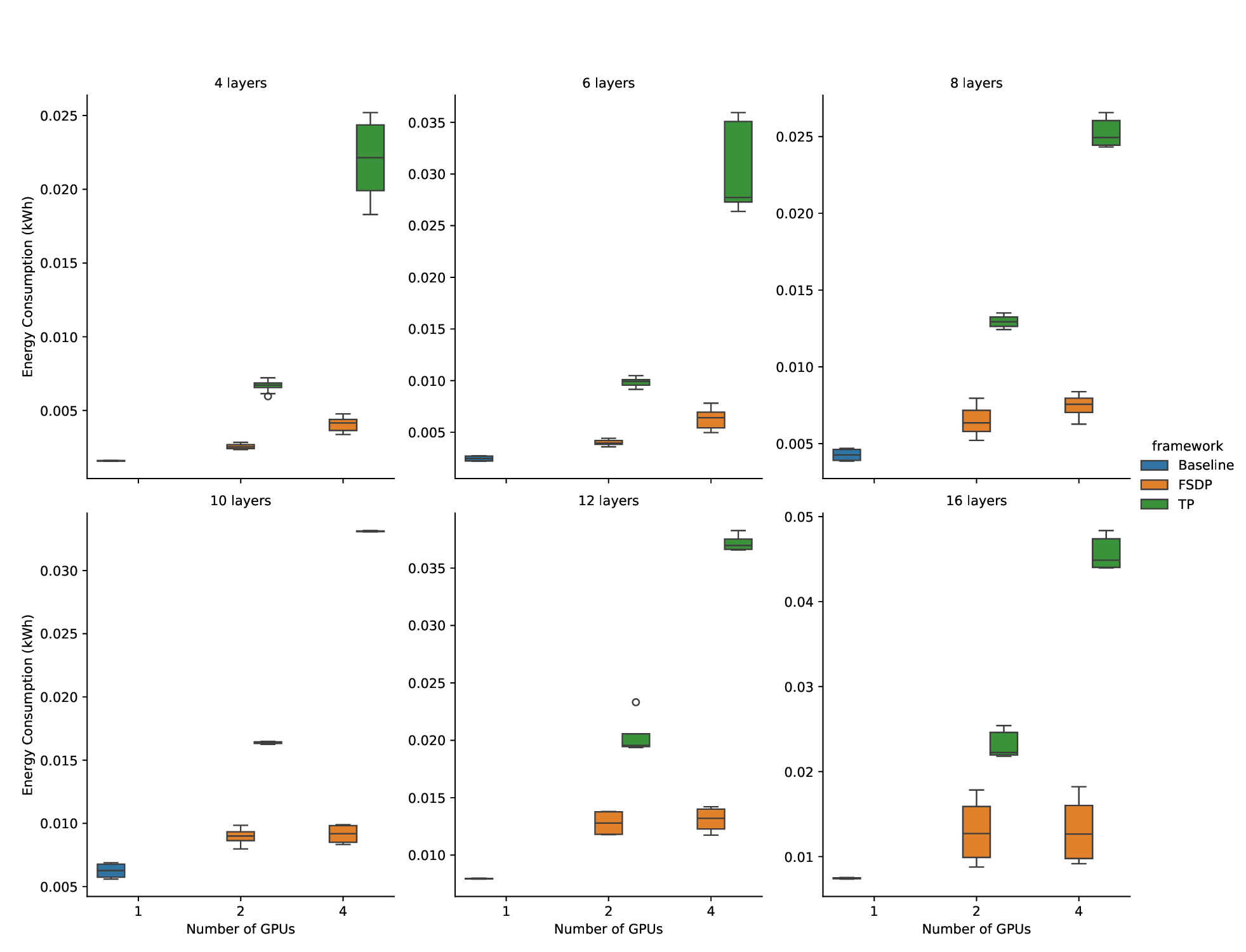}
    \caption{
    Distribution of training energy consumption as a function of model depth
    (number of Transformer layers) and GPU count. Each panel corresponds to a
    fixed depth; box plots show medians and interquartile ranges across baseline,
    fully sharded data parallelism (FSDP), and tensor parallelism (TP).
    }
    \label{fig:energyVslayersGpus}
\end{figure}

\section{Estimated Compute--Memory Energy Model}
\label{app:comp-mem}

This appendix reports the estimated parameters of the linear compute--memory energy decomposition introduced in Eq.~\eqref{eq:energy-decomp-main}. The model
expresses per-step training energy as an additive combination of compute cost,memory-traffic cost, and fixed overheads, and serves as a mechanistic baseline for
the reduced-form scaling laws used in the main text.

\begin{table}[ht]
\centering
\caption{Estimated coefficients of the compute--memory energy model
(Eq.~\eqref{eq:energy-decomp-main}) using HC3 robust standard errors.}
\label{tab:ols_results}
\small
\begin{tabular}{lccccc}
\toprule
\textbf{Variable} & \textbf{Coef.} & \textbf{Std. Err.} & \textbf{$z$} & \textbf{$P>|z|$} & \textbf{95\% CI} \\
\midrule
$E_{0}$ (Intercept)                   & 0.0028 & 0.0010 & 3.72  & 0.000 & [0.001, 0.004] \\
Strategy: FSDP                        & 0.0041 & 0.0010 & 4.59  & 0.000 & [0.002, 0.006] \\
Strategy: TP                          & 0.0163 & 0.0012 & 14.07 & 0.000 & [0.014, 0.019] \\
Compute ($C$, centered)               & 0.0027 & 0.0010 & 4.38  & 0.000 & [0.001, 0.004] \\
Memory ($M_{\text{proxy}}$, centered) & $3.0 \times 10^{-4}$ & $4.4 \times 10^{-5}$ & 5.96 & 0.000 & [$2.1 \times 10^{-4}$, $3.9 \times 10^{-4}$] \\
\bottomrule
\end{tabular}
\end{table}

The results confirm that both compute and memory traffic contribute positively and
significantly to training energy, consistent with the additive structure of the
model. The memory-traffic proxy exhibits a statistically significant coefficient,
indicating that off-chip data movement contributes non-negligibly to energy
consumption even after controlling for total compute.

Parallelization strategy introduces substantial additive offsets. Relative to the
single-GPU baseline, both fully sharded data parallelism (FSDP) and tensor
parallelism (TP) are associated with higher per-step energy, reflecting additional
communication, synchronization, and coordination overheads that are not captured by
raw compute or memory proxies alone. The substantially larger coefficient for TP is
consistent with its heavier reliance on fine-grained collective communication.

While the compute--memory model explains a majority of the variance in per-step
energy, its adjusted $R^2$ of $0.694$ is significantly lower than that of the
reduced-form model incorporating hardware efficiency. This gap motivates the
introduction of the speedup-based hardware-efficiency factor $\eta_h$ in the main
text, which captures execution-time effects and parallel efficiency beyond static
compute and memory costs.

\section{Final Estimated Energy Model}
\label{app:final-model}

Unless otherwise stated, all reported results use the reduced-form model
Eq.~\eqref{eq:reduced-form-main}, with the hardware-efficiency term $\eta_h$
parameterized via empirical speedup models (Eq.~\eqref{eq:speedup}). Estimated
coefficients and confidence intervals are summarized in
Table~\ref{tab:exponents}. The model achieves an adjusted coefficient of
determination $R^2_{\mathrm{adj}} = 0.853$ over 410 independent configurations,
indicating strong explanatory power across heterogeneous architectures and
parallelization strategies.
As illustrated in Figure~\ref{fig:actual_vs_predicted}, the model demonstrates high predictive fidelity across three orders of magnitude of energy consumption ($10^{-3}$ to $10^{-1}$ kWh). The observations are tightly clustered around the 1:1 perfect prediction line, confirming that the log-linear formulation effectively captures the underlying physical energy dynamics.
The estimated coefficients reveal a strongly sub-linear dependence of energy on total compute, with elasticity $\hat{\alpha}_C = 0.196$This reduced elasticity reflects the fact that a substantial portion of energy scaling with workload size is mediated through changes in hardware utilization, which are explicitly captured by the parallel-efficiency term. Once utilization is accounted for, the residual sensitivity of energy to raw FLOP count is markedly attenuated.

Memory traffic also contributes positively to energy consumption. The coefficient on $\log(M_{\mathrm{proxy}})$ is positive and statistically significant, indicating that configurations with larger activation footprints incur higher energy costs, consistent with memory-bound regimes predicted by roofline analysis.

Hardware efficiency plays a dominant role. The coefficient on
$\log(\text{parallel efficiency})$ is
$\hat{\alpha}_{\eta} = -1.06 $,
implying that improvements in parallel efficiency translate directly into reduced
energy consumption. This result empirically validates
Lemma~\ref{lem:speedup_to_energy} and confirms that execution-time scaling is a
first-order determinant of training energy.

Parallelization strategy effects are substantial. Relative to the single-GPU
baseline, both fully sharded data parallelism (FSDP) and tensor parallelism (TP)
exhibit significant upward shifts in energy consumption when controlling for
compute, memory, and efficiency. Specifically, the estimated multiplicative
factors are
$
\Psi_{\mathrm{FSDP}} \approx \exp(5.38)$,$
\Psi_{\mathrm{TP}} \approx \exp(7.65),
$
reflecting the additional communication and coordination overheads introduced by
distributed execution beyond what is captured by speedup alone. These effects
highlight the importance of explicitly modeling strategy-dependent constants in
energy prediction.

\begin{table}[ht]
\centering
\caption{
Estimated coefficients for the Transformer energy model.
The last two columns report 95\% confidence intervals.
The model attains $R^2_{\mathrm{adj}} = 0.853$ over 410 configurations.
}
\label{tab:exponents}

\resizebox{\columnwidth}{!}{%
\begin{tabular}{l|cccccc}
\hline
\textbf{Variable} & \textbf{Coef.} & \textbf{Std. Err.} & \textbf{z} & \textbf{P$>|z|$} & \textbf{[0.025} & \textbf{0.975]} \\ \hline
Intercept & -10.5283 &  0.449 & -23.472 & 0.000 & -11.407& -9.649 \\
Strategy: FSDP & 5.3839 & 0.466 & 11.556 & 0.000 & 4.471 & 6.297 \\
Strategy: TP & 7.6487 & 0.581 & 13.165 & 0.000 & 6.510 & 8.787 \\
$\log(\text{compute})$ & 0.1955 & 0.025 & 7.881 & 0.000 & 0.147 & 0.244 \\
$\log(M_{\mathrm{proxy}})$ & 0.0980 & 0.037 & 2.637 & 0.008 & 0.025 & 0.171 \\
$\log(\text{parallel efficiency})$ & -1.0601 & 0.099 & -10.678 & 0.000 & -1.255 & -0.865 \\ \hline
\multicolumn{5}{l}{\small \textit{Notes: Standard errors are HC3 robust. Continuous variables are mean-centered.}} \\
\multicolumn{5}{l}{\small \textit{Adjusted $R^2 = 0.8525$; Durbin-Watson = 2.125; Condition Number = 86.5.}} \\
\end{tabular}
}
\end{table}

\section{Model Diagnostics and Robustness}
\label{app:diag}
\paragraph{No-$\eta_h$ baseline.}
To isolate the contribution of hardware-efficiency modeling, we compare a
baseline power-law model,
$E=\kappa\,C^{\alpha_C} M^{\alpha_M}$, to the proposed full specification that
augments compute and memory with a speedup-derived hardware-efficiency term and
architecture-specific effects. As shown in Appendix Table~\ref{tab:model_comparison}, including $\eta_h$ substantially improves
predictive performance: explained variance increases from
$R^2=0.7768$ to $R^2=0.8543$ ($+\;0.078$), while RMSE decreases by
$0.091 kwh$ (a 19\% reduction). Calibration remains unchanged, with a unit
slope in both models, indicating that the gains arise from reduced dispersion
rather than rescaling. A likelihood-ratio test strongly favors the full model, demonstrating that a substantial fraction
of energy variation is driven by implementation-specific throughput and
parallelization effects rather than compute and memory scaling alone.

\begin{table}[ht]
\centering
\caption{Model comparison: baseline power law vs.\ full energy model}
\label{tab:model_comparison}
\resizebox{\columnwidth}{!}{%
\begin{tabular}{lccc}
\hline
\textbf{Metric} & \textbf{Power Law (A)} & \textbf{Full Model (B)} & \textbf{$\Delta$ (B--A)} \\ \hline
$R^2$ & 0.7768 & 0.8543 & +0.0775 \\
RMSE (kwh) & 0.4761 & 0.3846 & $-0.0914$ \\
Calibration slope & 1.0000 & 1.0000 & 0.0000 \\ \hline
\multicolumn{4}{l}{\textit{Likelihood-ratio test statistic: 174.90}} \\
\multicolumn{4}{l}{\textit{LRT $p$-value: $6.30\times10^{-40}$}} \\ \hline
\end{tabular}
}
\end{table}
To validate the reliability of our energy scaling laws, we conducted a multi-dimensional diagnostic analysis of the Ordinary Least Squares (OLS) residuals. The results are reported in figures \ref{fig:residuals}.

\paragraph{Linearity and Functional Form:} The \textit{Residuals vs. Fitted} plot reveals a slight U-shaped curvature in the LOWESS trend line. While the log-log specification captures the primary scaling trend (Adj. $R^2 = 0.853$), these marginal deviations at the extreme ends of the energy spectrum suggest that the scaling exponent ($\alpha$) may undergo subtle shifts as models transition between small-scale and large-scale regimes.

\paragraph{Residual diagnostics}
To evaluate the reliability of the energy scaling law, a comprehensive diagnostic analysis was performed on the model's residuals, as illustrated in Figure \ref{fig:residuals} in appendix. The model demonstrates a robust log-log fit with an Adjusted  of 0.853 and a Durbin-Watson statistic of 2.125, confirming the absence of first-order autocorrelation. While the \textit{Residuals vs. Fitted} plot reveals minor non-linearity at the extreme ends of the compute scale and the \textit{Normal Q-Q} plot indicates heavy-tailed residuals typical of hardware power spikes, the error distribution remains stable across different training strategies and architectural proxies.

\paragraph{Feature-Specific Bias:} The \textit{Residuals vs. $\log$ Compute} and \textit{Residuals vs. $\log M_{proxy}$ (Centered)} plots show relatively balanced dispersion across the centered compute range. However, a minor upward trend in residuals is observed for the highest model-size proxies, indicating that for exceptionally large architectural dimensions, the model may slightly underestimate energy consumption.

\paragraph{Strategy-Level Stability:} The \textit{Residuals by Strategy} boxplot demonstrates that prediction error is consistent across \textit{Baseline}, \textit{FSDP}, and \textit{TP}. While distributed strategies exhibit slightly higher variance and a few isolated outliers, the median residual for each category remains centered at zero, justifying our use of strategy-specific intercept shifts to account for parallelism overhead.

\paragraph{Error Structure and Normality:} The \textit{Normal Q-Q Plot} displays heavy-tailed residuals (leptokurtosis), where extreme energy outcomes deviate from the Gaussian line at the tails. This is consistent with hardware benchmarking data subjected to transient power spikes. Furthermore, a Durbin-Watson statistic of 2.125 confirms that our aggregation strategy successfully eliminated first-order autocorrelation.

\paragraph{Influence and Numerical Stability:} The \textit{Cook's Distance} plot identifies two specific clusters of influential observations (near indices 60 and 265). Despite these clusters, all distances remain below 0.08, well within safe thresholds for regression stability. Finally, mean-centering continuous variables reduced the Condition Number to 86.5, ensuring that the effects of compute and parallel efficiency are numerically decoupled and individually interpretable.

\begin{figure}[t]
\centering
\includegraphics[width=\columnwidth]{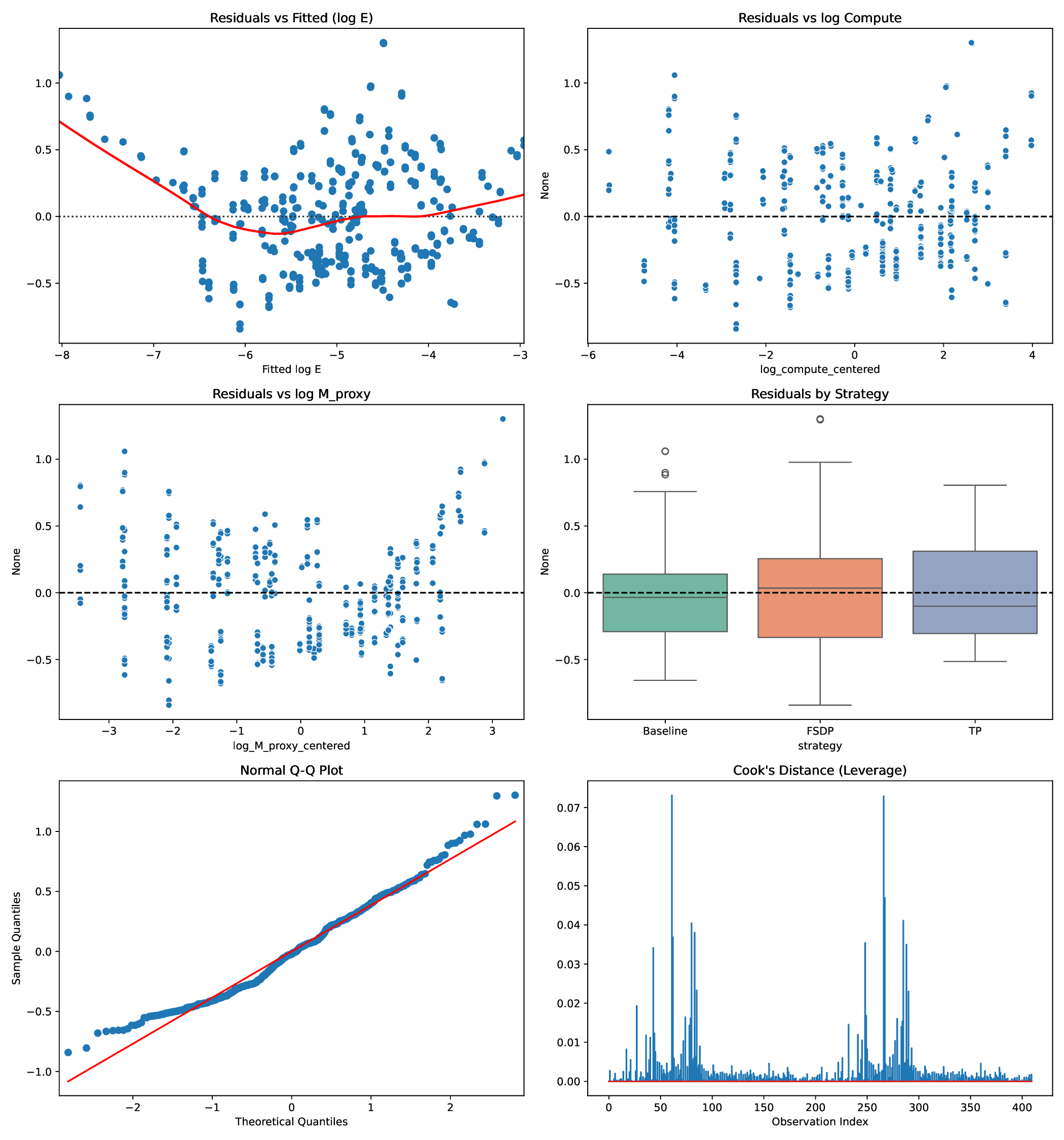}
\caption{Comprehensive diagnostic suite for the energy scaling law model ($N=410$). \textbf{Top Row:} Residuals vs. Fitted and Centered Log Compute show overall linearity and consistency. \textbf{Middle Row:} Residuals vs. Architectural Proxy ($\log M_{proxy}$) and the Strategy boxplot verify stability across model types and frameworks. \textbf{Bottom Row:} The Q-Q plot and Cook's Distance confirm that while the data is heavy-tailed due to hardware power spikes, no individual observations exert undue influence on the scaling coefficients.}
\label{fig:residuals}
\end{figure}

\section{Energy Estimation for State-of-the-Art Models}
\label{app:sota-energy}

Table~\ref{tab:sota} reports a comparison between energy consumption
figures reported in the literature for several state-of-the-art language models
and energy estimates obtained using our proposed framework. The models considered
include GPT-3, GShard, Meena, Switch Transformer, and T5, covering a wide range of
parameter counts, architectural depths, and training regimes.

\paragraph{Reconstruction methodology.}
For each model, we reconstruct the quantities required by our energy model from
publicly available information, including parameter count, number of layers $L$,
hidden width $W$, and reported training compute. To enable a controlled comparison
across heterogeneous systems, we evaluate all models under a \emph{fixed-data
regime}, holding the number of training tokens per epoch constant
($10^{14}$ tokens) and fixing the number of GPUs to $100$. From these quantities,
we derive the per-epoch compute $C_{\text{epoch}}$ and a memory-traffic proxy
$M_{\text{proxy}}$ based on activation volume. 

\paragraph{Estimated energy under parallel strategies.}
Using the reconstructed inputs, we estimate energy consumption per epoch and total
energy under two parallelization strategies: fully sharded data parallelism (FSDP)
and tensor parallelism (TP). Estimates are produced by applying the reduced-form
energy model with the corresponding speedup-based hardware-efficiency factors.
Reported energy values from the original publications are shown for reference.

\paragraph{Interpretation and limitations.}
Absolute energy values are not expected to match reported figures, as the original
models were trained on large-scale TPU or GPU clusters with different accelerators,
interconnects, and system-level overheads. Accordingly, the estimates in
Table~\ref{tab:sota} should be interpreted as
order-of-magnitude reconstructions rather than replications. Despite these
differences, the estimates preserve qualitative trends across models and
parallelization strategies. In particular, the results highlight the strong
sensitivity of energy consumption to parallel efficiency: for a fixed token
budget, TP consistently yields higher estimated energy than FSDP, reflecting the
dominant role of execution-time scaling captured by the hardware-efficiency term.

Overall, this analysis demonstrates how the proposed framework can be used to
extrapolate training energy for large models from limited experimental calibration,
providing a practical tool for comparative and predictive energy analysis when
direct measurements are unavailable.

\input{tab_params}

%% file: tab_params.tex
\begin{table}[ht]
\centering
\caption{Comparison of reported and estimated energy consumption for SOTA models.}
\label{tab:sota}
\scriptsize 
\setlength{\tabcolsep}{3pt} 
\begin{tabular}{llrrrcrrccrrrrr}
\toprule
\textbf{Model} & \textbf{Dev.} & \textbf{Params} & \textbf{FLOPs} & \textbf{L} & \textbf{W} & \textbf{GPUs} & \textbf{Tokens} & \textbf{C\_epoch} & \textbf{M\_proxy} & \textbf{Rep. Energy} & \multicolumn{2}{c}{\textbf{FSDP Est. (MWh)}} & \multicolumn{2}{c}{\textbf{TP Est. (MWh)}} \\
 & & (B) & (ZF) & & & & /Epoch & (FLOPs) & (Proxy) & (MWh) & Epoch. & Tot. & Epoch. & Tot. \\
\midrule
GPT-3  \cite{brown2020language}& OpenAI & 175.0 & 314.0 & 96 & 12288 & 100 & 1e14 & 1.81e25 & 2.42e23 & 1287.0 & 0.12 & 17.33 & 4.88 & 696.13 \\
GShard \cite{lepikhin2020gshard} & Google & 619.0 & 133.3 & 48 & 2048 & 100 & 1e14 & 2.57e23 & 5.03e21 & 24.1 & 0.04 & 1.72 & 1.45 & 69.08 \\
Meena \cite{adiwardana2020towards}& Google & 2.6 & 112.0 & 24 & 1024 & 100 & 1e14 & 3.21e22 & 6.29e20 & 232.0 & 0.02 & 6.90 & 0.79 & 276.94 \\
Switch \cite{fedus2022switch}& Google & 1500.0 & 82.2 & 32 & 2048 & 100 & 1e14 & 1.71e23 & 3.36e21 & 179.0 & 0.03 & 8.40 & 1.29 & 337.34 \\
T5  \cite{raffel2020exploring}& Google & 11.0 & 40.5 & 24 & 1024 & 100 & 1e14 & 3.40e22 & 1.26e21 & 85.7 & 0.02 & 2.65 & 0.85 & 106.47 \\
\bottomrule
\end{tabular}
\end{table}

%% file: proofs.tex
\newpage
\section{Proofs}
\subsection{Detailed Proof of Proposition~\ref{prop:roofline}}
\label{proof:roofline}
\begin{proposition}[Roofline energy lower bounds]
Let $C$ be the number of FLOPs executed in one training step, $M$ the number of
DRAM/HBM bytes transferred, $\mathrm{AI}\triangleq C/M$ the arithmetic intensity,
$F_{\mathrm{peak}}$ the device peak FLOP/s, $B$ the sustained memory bandwidth
(bytes/s), and let $P_{\mathrm{dyn}}$ denote the dynamic power during the active
portion of the step. Then the step time $T$ satisfies
\[
T \;\ge\; \frac{C}{\min\{F_{\mathrm{peak}},\,B\,\mathrm{AI}\}},
\]
and consequently the energy satisfies
\[
E \;\ge\; \frac{P_{\mathrm{dyn}}}{F_{\mathrm{peak}}}\,C
\qquad\text{and}\qquad
E \;\ge\; \frac{P_{\mathrm{dyn}}}{B}\,M .
\]
\end{proposition}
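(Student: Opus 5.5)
The plan is to derive the time bound from two separate resource constraints and then turn time into energy using the dynamic power. During the active portion of the step, the device cannot execute floating-point operations faster than $F_{\mathrm{peak}}$. It also cannot move bytes to or from DRAM/HBM faster than the sustained bandwidth $B$. Integrating these rate limits over the step interval $[0,T]$ gives
\[
C \le F_{\mathrm{peak}}\,T \qquad\text{and}\qquad M \le B\,T .
\]
These are the two elementary inequalities from which everything else follows.

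\textbf{Step 1: the roofline time bound.} From the two inequalities, $T \ge \max\{C/F_{\mathrm{peak}},\, M/B\}$. Substituting $M = C/\mathrm{AI}$ turns the second term into $C/(B\,\mathrm{AI})$. Factoring out $C$ gives
\[
T \ge C\cdot\max\Bigl\{\tfrac{1}{F_{\mathrm{peak}}},\,\tfrac{1}{B\,\mathrm{AI}}\Bigr\} = \frac{C}{\min\{F_{\mathrm{peak}},\,B\,\mathrm{AI}\}},
\]
which is the claimed step-time bound. Equivalently, the achieved throughput $C/T$ is at most $\min\{F_{\mathrm{peak}},\,B\,\mathrm{AI}\}$, which is the classical roofline statement quoted in Proposition~\ref{prop:roofline}.

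\textbf{Step 2: from time to energy.} I will write the step energy as $E=\int_0^T P(t)\,dt$. Here $P_{\mathrm{dyn}}$ is interpreted as the dynamic power during the active portion, taken as a lower envelope of $P(t)$ on that interval (or as the average power, as in assumption (i) of Theorem~\ref{thm:bands}). Then $E \ge P_{\mathrm{dyn}}\,T$, where the static and overhead contributions ($E_0$) are nonnegative and can be discarded. Combining this with the two branches of the maximum in Step 1 gives
\[
E \ge \frac{P_{\mathrm{dyn}}}{F_{\mathrm{peak}}}\,C \qquad\text{and}\qquad E \ge \frac{P_{\mathrm{dyn}}}{B}\,M .
\]
Each inequality follows directly from one branch, not just the weaker of the two. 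As a consistency remark, I will note that identifying $B$ with the effective service rate $\tau_{\mathrm{mem}}$ recovers the main-text form \eqref{eq:roofline-bounds}.

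\textbf{Main obstacle.} The only delicate point is justifying $E \ge P_{\mathrm{dyn}}\,T$, since power is not constant over the step. I would state explicitly that $P_{\mathrm{dyn}}$ denotes either the time-averaged power over the active interval, in which case equality $E = P_{\mathrm{dyn}}T$ holds on that interval, or an infimum of the instantaneous power there. If only a sub-interval of length $T_{\mathrm{act}} \le T$ is active, I would apply the rate bounds on that sub-interval. All computation and traffic occur during the active time, so $T_{\mathrm{act}}$ itself obeys the same lower bound as $T$ in Step 1. The energy bounds therefore still hold with the active-interval energy, which is at most $E$.
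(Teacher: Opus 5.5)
Your proposal is correct and follows the same route as the paper's proof: first a roofline bound on step time, then $E \ge P_{\mathrm{dyn}}T$, then splitting $\min\{F_{\mathrm{peak}},B\,\mathrm{AI}\}$ into its two branches. The differences are refinements, not a new argument: you derive the roofline throughput bound from the rate constraints $C \le F_{\mathrm{peak}}T$ and $M \le BT$ instead of citing it, and you apply the bounds on the active interval $T_{\mathrm{act}}$, which handles idle phases more cleanly than the paper's parenthetical remark.
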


\begin{proof}
\emph{(Throughput bound.)}
The roofline model states that the achieved compute throughput is bounded by
\[
\mathrm{Perf} \;\le\; \min\{F_{\mathrm{peak}},\,B\,\mathrm{AI}\}
\;\triangleq\; F_{\mathrm{eff}} .
\]
The term $F_{\mathrm{peak}}$ is a hard upper bound imposed by the device’s
functional units and clock frequency, while the term $B\,\mathrm{AI}$ reflects the
maximum sustained compute rate achievable when memory bandwidth limits execution.

\emph{(Time lower bound.)}
By definition, $\mathrm{Perf}=C/T$, hence $C/T \le F_{\mathrm{eff}}$, which implies
\[
T \;\ge\; \frac{C}{F_{\mathrm{eff}}}
\;=\; \frac{C}{\min\{F_{\mathrm{peak}},\,B\,\mathrm{AI}\}} .
\]

\emph{(Energy lower bounds.)}
Energy is given by $E=\int_0^T P(t)\,dt$. During the active portion of the step, the
power draw is at least a positive dynamic level $P_{\mathrm{dyn}}$ (idle phases can
only increase $T$ for fixed $C$). Therefore,
\[
E \;\ge\; P_{\mathrm{dyn}}\,T
\;\ge\; P_{\mathrm{dyn}}\,\frac{C}{\min\{F_{\mathrm{peak}},\,B\,\mathrm{AI}\}} .
\]
Using $\min\{a,b\}\le a$ and $\min\{a,b\}\le b$ with
$a=F_{\mathrm{peak}}$ and $b=B\,\mathrm{AI}$ yields
\[
E \;\ge\; \frac{P_{\mathrm{dyn}}}{F_{\mathrm{peak}}}\,C
\quad\text{and}\quad
E \;\ge\; \frac{P_{\mathrm{dyn}}}{B\,\mathrm{AI}}\,C
\;=\; \frac{P_{\mathrm{dyn}}}{B}\,M .
\]
\end{proof}

\subsection{Detailed Proof of Theorem~\ref{thm:bands}}
\label{app: energy}
\begin{theorem}[Expected energy scaling bands]
Assume (i) quasi-stationary power during a training step; (ii) a stable operation
mix as hyperparameters vary; and (iii) over the hyperparameter ranges studied there
exist constants $k_1,k_2>0$ and an exponent $\beta\in[\tfrac12,1)$ such that the
dominant DRAM traffic co-scales with compute as
\[
k_1\,C^{\beta} \;\le\; M \;\le\; k_2\,C^{\beta}.
\]
Then, for device-specific constants
$\epsilon_{\mathrm{comp}},\epsilon_{\mathrm{mem}}>0$ and a fixed overhead
$E_0\ge 0$,
\[
\min\{\epsilon_{\mathrm{mem}}\,M,\;\epsilon_{\mathrm{comp}}\,C\}
\;\lesssim\; E
\;\lesssim\; \epsilon_{\mathrm{comp}}\,C + \epsilon_{\mathrm{mem}}\,M + E_0 .
\]
Moreover, over any range where $E_0$ is negligible, the effective log--log slope of
$E$ with respect to $C$ lies in $(\beta,1)$, yielding the reduced form
\[
E \;\approx\; \kappa\,C^{\alpha_C}\,N^{\alpha_N}\,\eta_h^{\alpha_\eta},
\quad\text{with}\quad
\alpha_C\in(\tfrac12,1],\;\alpha_\eta<0.
\]
\end{theorem}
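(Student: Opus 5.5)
The plan is to obtain the two-sided band directly from Proposition~\ref{prop:roofline} and the additive decomposition in Eq.~\eqref{eq:energy-decomp-main}. Then, in the regime where $E_0$ is negligible, I will read off the log--log slope from assumption (iii).

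\emph{Upper bound.} Under (i) and (ii), the step decomposes into phases whose power is approximately constant. I will charge each FLOP a device-specific energy $\epsilon_{\mathrm{comp}}$ and each byte a cost $\epsilon_{\mathrm{mem}}$; a stable op-mix keeps these constants fixed as hyperparameters vary. Everything else (host activity, leakage) goes into $E_0$. In the worst case compute and memory activity do not overlap, so their energies simply add. This gives $E\lesssim \epsilon_{\mathrm{comp}}C+\epsilon_{\mathrm{mem}}M+E_0$, with $\lesssim$ absorbing the quasi-stationarity error.

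\emph{Lower bound.} Here I invoke Proposition~\ref{prop:roofline}. It gives $E\ge (P_{\mathrm{dyn}}/F_{\mathrm{peak}})\,C$ and $E\ge (P_{\mathrm{dyn}}/B)\,M$, so $E$ is at least the maximum, and a fortiori the minimum, of the two terms. I identify $\epsilon_{\mathrm{comp}}\asymp P_{\mathrm{dyn}}/F_{\mathrm{peak}}$ and $\epsilon_{\mathrm{mem}}\asymp P_{\mathrm{dyn}}/B$ up to device constants, which by (i) do not depend on the configuration. This yields $\min\{\epsilon_{\mathrm{mem}}M,\epsilon_{\mathrm{comp}}C\}\lesssim E$.

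\emph{Slope.} For the ``moreover'' part, set $E_0=0$ and substitute (iii). Then $E$ is sandwiched between $c_1 C^{\beta}$ and $c_2 C+c_3C^{\beta}$ for constants $c_1,c_2,c_3$. I will write $E(C)=aC+bC^{\beta}$, with $a,b$ bounded above and below by positive constants, as the representative of the band. Its elasticity is
\[
\frac{d\log E}{d\log C}=\frac{aC+\beta bC^{\beta}}{aC+bC^{\beta}},
\]
which is a convex combination of $1$ and $\beta$ with both weights strictly positive. The slope therefore lies in $(\beta,1)$, and since $\beta\ge\tfrac12$ it lies in $(\tfrac12,1)$.

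\emph{Reduced form.} Over a bounded range of $\log C$, a local log-linearization gives $E\approx\kappa C^{\alpha_C}$ with $\alpha_C$ equal to an average of this slope. The memory dependence is absorbed into a separate power of the size proxy, and the efficiency factor comes from Lemma~\ref{lem:speedup_to_energy}. Since $T\propto 1/S$ and $\eta_h\propto S/N$, with power roughly proportional to $N$, energy scales as $\eta_h^{-1}$, which gives $\alpha_\eta<0$. The closed endpoint $\alpha_C=1$ covers the purely compute-dominated limit.

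\emph{Main obstacle.} The hard step is making the lower bound and the slope claim rigorous. The lower bound only holds up to $\asymp$: I have to justify that the $\epsilon$'s appearing in the additive model match $P_{\mathrm{dyn}}/F_{\mathrm{peak}}$ and $P_{\mathrm{dyn}}/B$ up to constants, which I will handle by stating this identification as part of assumptions (i) and (ii). Likewise, (iii) only bounds $M$ between two power laws and does not fix it, so the pointwise elasticity of the true $E$ is not controlled. I will therefore interpret ``effective slope'' as the secant slope between two compute levels $C_1<C_2$ far enough apart that $\log(k_2/k_1)$ is small relative to $\log(C_2/C_1)$. With that interpretation, the sandwich bounds alone force the secant slope into $(\beta,1)$ up to a vanishing correction.
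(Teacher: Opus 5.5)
Your proposal is correct and follows essentially the same route as the paper. The upper band comes from the additive model plus $M\le k_2C^{\beta}$. The lower band comes from Proposition~\ref{prop:roofline} used as a maximum; the paper writes this as $\max\{a'C,\,b'C^{\beta}\}\le E$ with $a'=P_{\mathrm{dyn}}/F_{\mathrm{peak}}$ and $b'=P_{\mathrm{dyn}}k_1/B$. The slope comes from the elasticity of $aC+bC^{\beta}$ as a convex combination of $1$ and $\beta$, which the paper computes on the upper envelope and then simply asserts for $E$.

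Your secant-slope fix is a real tightening of that last step, with one correction. It only works with the full lower bound $\max\{a'C,\,b'C^{\beta}\}$, not the $c_1C^{\beta}$ of your slope paragraph, since with $c_1C^{\beta}$ alone the band's log-width grows like $(1-\beta)\log C$. With the full bound, the correction is $\log K/\log(C_2/C_1)$, where $K=\epsilon_{\mathrm{comp}}/a'+\epsilon_{\mathrm{mem}}k_2/b'$ depends on how closely the $\epsilon$'s match the roofline constants, not only on $k_2/k_1$.
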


\begin{proof}
\emph{(Sandwich bounds.)}
Let $a\triangleq\epsilon_{\mathrm{comp}}$ and $b\triangleq\epsilon_{\mathrm{mem}}$.
Using the upper co-scaling bound on $M$,
\[
E \;\le\; a\,C + b\,M + E_0
\;\le\; a\,C + b\,k_2\,C^{\beta} + E_0 .
\]
Using the lower co-scaling bound together with
Proposition~\ref{prop:roofline},
\[
E \;\ge\; \max\!\left\{
\frac{P_{\mathrm{dyn}}}{F_{\mathrm{peak}}}\,C,\;
\frac{P_{\mathrm{dyn}}}{B}\,M
\right\}
\;\ge\; \max\!\left\{a' C,\; b' C^{\beta}\right\},
\]
where $a'=\tfrac{P_{\mathrm{dyn}}}{F_{\mathrm{peak}}}$ and
$b'=\tfrac{P_{\mathrm{dyn}}}{B}k_1$ are positive constants. Hence, for $C$ in the
experimental range,
\begin{equation}
\label{eq:band-sandwich}
\max\{a' C,\, b' C^{\beta}\}
\;\le\; E
\;\le\; a\,C + b\,k_2\,C^{\beta} + E_0 .
\end{equation}

\emph{(Log--log slope bounds.)}
Consider $f(C)=a\,C + \tilde b\,C^{\beta}+E_0$ with
$\tilde b=b\,k_2>0$ and $0<\beta<1$. The instantaneous log--log slope is
\[
s(C)\;\triangleq\;\frac{d\log f(C)}{d\log C}
= \frac{C f'(C)}{f(C)}
= \frac{a\,C + \tilde b\,\beta\,C^{\beta}}
       {a\,C + \tilde b\,C^{\beta} + E_0}.
\]
All terms are positive, hence $s(C)\in(0,1)$. When $C$ is large enough that $E_0$ is
negligible, $s(C)$ lies strictly between $\beta$ and $1$, approaching $\beta$ in
the memory-dominated regime and $1$ in the compute-dominated regime. Therefore, any
log--log regression of $E$ on $C$ over a compact range with a non-degenerate mix
yields an effective exponent $\alpha_C\in(\beta,1)$.

\emph{(From $\beta$ to $(\tfrac12,1]$.)}
Architectural co-scalings determine $\beta$. For RNNs (LSTM/GRU),
$C=\Theta(B T h^2)$ while $M=\Theta(B T h)$ (dominant state and activation traffic),
implying $M=\Theta(\sqrt{C})$ and $\beta=\tfrac12$. For Transformers, compute grows
as $\Theta(BL^2 d)+\Theta(BL d^2)$ while $M=\Theta(BL d)$ (activations), yielding
$M=\Theta(C^{\beta})$ with $\beta\in(\tfrac12,1)$ over typical depth and width
sweeps. Consequently, $\alpha_C\in(\tfrac12,1]$ across these model families.

\emph{(Incorporating $N$ and $\eta_h$.)}
Dependence on model size $N$ and hardware efficiency $\eta_h$ introduces
multiplicative factors that vary slowly relative to $C$. In log space, these
contribute additive terms $\alpha_N\log N$ and $\alpha_\eta\log\eta_h$. Since higher
$\eta_h$ (greater utilization) reduces time and thus energy for fixed work, one has
$\alpha_\eta<0$, yielding the stated reduced form.
\end{proof}

\subsection{Detailed Proof of Lemma~\ref{lem:speedup_to_energy}}
\label{proof:speedup_to_energy}
\begin{lemma}[Transferability of speedup to energy efficiency]
Consider a fixed training workload executed on $N$ devices, and let $T(N)$ denote
the corresponding execution time. Let $P(t)$ be the instantaneous power draw and
$E(N)=\int_0^{T(N)} P(t)\,dt$ the total energy consumed. Assume that:

\begin{enumerate}
    \item[(A1)] The workload is fixed across device counts, i.e., the total amount
    of computation and memory traffic does not change with $N$.
    \item[(A2)] The average power draw
    $\bar{P}(N)\triangleq T(N)^{-1}\int_0^{T(N)}P(t)\,dt$
    is a slowly varying function of $N$ and is bounded above by a constant multiple
    of the single-device power draw.
    \item[(A3)] Changes in execution time dominate changes in average power as $N$
    varies.
\end{enumerate}

Then the energy scaling with respect to $N$ is primarily governed by the execution
time scaling, and the parallel speedup $S(N)=T(1)/T(N)$ induces a corresponding
hardware-efficiency factor
\[
\eta_h(N)\;\propto\;\frac{S(N)}{N}.
\]
In particular, any parametric model that accurately captures $T(N)$ (and hence
$S(N)$) provides a consistent proxy for the dependence of energy consumption on
parallelization.
\end{lemma}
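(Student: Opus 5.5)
The plan is to write energy as average power times time and then reduce every $N$-dependence to the speedup. By the definition of $\bar P(N)$ in (A2) we have the exact identity $E(N)=\bar P(N)\,T(N)$. Substituting $T(N)=T(1)/S(N)$ gives
\[
E(N)=\bar P(N)\,\frac{T(1)}{S(N)}
=\frac{\bar P(N)}{N\,\bar P_1}\cdot N\,\bar P_1\,\frac{T(1)}{S(N)},
\]
where $\bar P_1$ denotes the per-device average power in the single-device run, so that $E(1)=\bar P_1 T(1)$. The point of this factorisation is to isolate the ratio $S(N)/N$, which is the natural device-normalised throughput. Concretely, we write
\[
\frac{E(N)}{E(1)}=\rho(N)\,\frac{N}{S(N)},\qquad \rho(N)\triangleq\frac{\bar P(N)}{N\,\bar P_1}.
\]
Here $\rho(N)$ is the per-device power relative to the baseline. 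Assumption (A1) is what makes $E(1)$ and $E(N)$ refer to the same work, so this ratio is a well-posed energy-per-work comparison.

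The second step is to control $\rho$. Under (A2), $\bar P(N)\le c\,N\,\bar P_1$ (reading the bound as per-device), so $\rho$ is bounded above; I will also assume the natural lower bound that active devices draw at least idle power. Together these give $\rho(N)\in[\rho_-,\rho_+]$ with constants independent of $N$. Taking logarithms then yields
\[
\log\frac{E(N)}{E(1)}=-\log\frac{S(N)}{N}+O(1),
\]
where the $O(1)$ term, $\log\rho(N)$, varies slowly by (A2).

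The third step uses (A3) to show that the variation of $\log(S(N)/N)$ dominates that of $\log\rho(N)$, i.e.\ $|\Delta\log\rho|\ll|\Delta\log(N/S)|$ across the studied range. Consequently, defining $\eta_h(N)\triangleq S(N)/N$ up to a multiplicative constant gives $E\propto \eta_h^{-1}$ to leading order. This is consistent with the reduced form \eqref{eq:reduced-form-main} with $\alpha_\eta<0$ (near $-1$), and it matches the fitted value $\hat\alpha_\eta\approx -1.06$. For the final clause, if a parametric model $\hat T$ satisfies $\hat T(N)/T(N)\in[1-\delta,1+\delta]$, then $\hat S/S$ is within $O(\delta)$, so $\log\hat\eta_h=\log\eta_h+O(\delta)$. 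The induced energy proxy therefore differs only by a bounded additive term in log space, which is what consistency means here. Specialising to the TP and DP forms \eqref{eq:exp_decay} and \eqref{eq:Amdahl} recovers $\eta_{\mathrm{TP}}$ and $\eta_{\mathrm{DP}}$ in \eqref{eq:speedup}.

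The main obstacle is making (A2)/(A3) precise enough to be a real hypothesis rather than a restatement of the conclusion. In particular, $\bar P(N)$ as total power grows roughly linearly in $N$, so the "slowly varying" requirement must be placed on the per-device quantity $\rho(N)$ rather than on $\bar P(N)$ itself. I would state this normalisation explicitly and formalise (A3) as a bound on the ratio of log-variations, $\sup|\Delta\log\rho|/|\Delta\log(N/S)|\le\epsilon$. The proportionality $\eta_h\propto S/N$ then holds with exponent $-1+O(\epsilon)$.
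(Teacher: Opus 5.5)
Your skeleton is the paper's: $E=\bar P\,T$, a ratio to the one-device run, control of the power factor, (A3) read as a dominance of log-variations, then a transfer step for $\widehat T$. The per-device normalisation, however, is not cosmetic. As you acknowledge in your last paragraph, it replaces (A2)--(A3) by different hypotheses, and under the hypotheses as stated your third step fails.

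The lemma makes the \emph{total} average power $\bar P(N)$ slowly varying, with $|\Delta\log\bar P|\ll|\Delta\log T|$. Then $\log\rho(N)=\log\bar P(N)-\log N-\log\bar P(1)$ moves by about $-\Delta\log N$. Meanwhile $\Delta\log(N/S)=\Delta\log N+\Delta\log T$ is tiny whenever scaling is near-ideal. With the paper's fitted DP curve, $\eta_{\mathrm{DP}}(4)\approx 0.97$, so $\log(N/S)\approx 0.03$ against $|\log\rho(4)|\approx\log 4\approx 1.4$. Your ``$O(1)$'' term is then the dominant one, not a negligible correction.

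What the stated (A2)--(A3) actually yield is the paper's Step~3: $E(N)/E(1)=\frac{\bar P(N)}{\bar P(1)}\cdot\frac{1}{S(N)}$, hence $E(N)\approx\bar P(1)\,T(N)\propto 1/S(N)$. Energy therefore tracks execution time $T(N)$, not device-time $N\,T(N)$ as in your version. The paper reaches $S/N$ only afterwards, by a separate and essentially definitional comparison with the ideal time $T(1)/N$. There, $\eta_{\mathrm{par}}=S/N$ measures the excess of $E(N)$ over the ideal $E(1)/N$.

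So what you have proved is an amended lemma, with (A2)--(A3) imposed on the per-device ratio $\rho(N)=\bar P(N)/(N\,\bar P(1))$. Under those hypotheses the argument is sound. The effective exponent $-1+O(\epsilon)$ and the $O(\delta)$ bound carrying the accuracy of $\widehat T$ over to $\widehat\eta_h$ are correct, and sharper than the paper's Step~5.

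Your version also buys something real:
\begin{itemize}
\item $\eta_h^{-1}$ becomes the literal energy multiplier rather than a normalisation.
\item It is compatible with energy \emph{rising} with $N$, which the paper reports and which $E(N)\approx E(1)/S(N)$ with $S(N)>1$ cannot produce.
\item It matches a fitted $\hat\alpha_\eta\approx-1$ without an extra $N^{-1}$ factor.
\end{itemize}

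But to prove the statement as written, you must do one of two things. Either follow the paper's total-power route ($E\propto T(N)$, then normalise by $T(1)/N$). Or state explicitly that you are amending (A2)--(A3), and that the lemma's first conclusion becomes ``energy is governed by $N\,T(N)$'' rather than by $T(N)$.
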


\begin{proof}
Fix the workload and consider any device count $N\ge 1$. Define the average power
\[
\bar{P}(N)\;\triangleq\;\frac{1}{T(N)}\int_0^{T(N)} P(t)\,dt,
\]
so that, by the definition of an average,
\begin{equation}
\label{eq:energy_factorization}
E(N)\;=\;\int_0^{T(N)} P(t)\,dt \;=\; \bar{P}(N)\,T(N).
\end{equation}
Equation~\eqref{eq:energy_factorization} is an identity and holds without
assumptions.

\paragraph{Step 1: Relating energy ratios to time ratios.}
Consider the energy ratio between $N$ devices and a single device:
\begin{equation}
\label{eq:energy_ratio}
\frac{E(N)}{E(1)}
\;=\;
\frac{\bar{P}(N)\,T(N)}{\bar{P}(1)\,T(1)}
\;=\;
\frac{\bar{P}(N)}{\bar{P}(1)}\cdot\frac{T(N)}{T(1)}
\;=\;
\frac{\bar{P}(N)}{\bar{P}(1)}\cdot\frac{1}{S(N)}.
\end{equation}
Thus, for any $N$, energy scaling is exactly determined by the product of a power
ratio and an inverse speedup factor.

\paragraph{Step 2: Bounding and controlling the power term.}
Assumption (A2) states that $\bar{P}(N)$ is slowly varying in $N$ and is bounded
above by a constant multiple of $\bar{P}(1)$. Concretely, this implies the
existence of a constant $c\ge 1$ such that for all $N$ in the regime of interest,
\begin{equation}
\label{eq:power_bound}
1/c \;\le\; \frac{\bar{P}(N)}{\bar{P}(1)} \;\le\; c,
\end{equation}
possibly after restricting to a finite range of $N$ used in experiments. (The lower
bound is not essential for the conclusion, but it simplifies the interpretation:
average power cannot collapse to zero as $N$ grows under fixed workload execution.)

Combining \eqref{eq:energy_ratio} and \eqref{eq:power_bound} yields
\begin{equation}
\label{eq:energy_ratio_bounds}
\frac{1}{c}\cdot\frac{1}{S(N)}
\;\le\;
\frac{E(N)}{E(1)}
\;\le\;
c\cdot\frac{1}{S(N)}.
\end{equation}
Therefore, up to a multiplicative constant that does not scale rapidly with $N$,
energy is inversely proportional to the speedup.

\paragraph{Step 3: Formalizing ``time dominates power''.}
Assumption (A3) states that changes in execution time dominate changes in average
power as $N$ varies. One formal way to express this is via log-derivatives (elasticities):
\begin{equation}
\label{eq:dominance_assumption}
\left|\frac{d\log \bar{P}(N)}{d\log N}\right|
\;\ll\;
\left|\frac{d\log T(N)}{d\log N}\right|,
\end{equation}
over the device-count range considered. Using \eqref{eq:energy_factorization},
\[
\frac{d\log E(N)}{d\log N}
=
\frac{d\log \bar{P}(N)}{d\log N}
+
\frac{d\log T(N)}{d\log N}.
\]
Under \eqref{eq:dominance_assumption}, the first term is negligible relative to the
second, hence
\[
\frac{d\log E(N)}{d\log N}
\;\approx\;
\frac{d\log T(N)}{d\log N},
\]
which makes precise the statement that energy scaling is primarily governed by
time scaling.

Equivalently, combining (A2) and (A3) implies that, to leading order,
\begin{equation}
\label{eq:energy_time_proportional}
E(N)\;\approx\;\bar{P}(1)\,T(N)
\qquad\text{and thus}\qquad
E(N)\;\propto\;T(N),
\end{equation}
where the proportionality absorbs the slowly varying power factor.

\paragraph{Step 4: Induced hardware-efficiency factor.}
Define the \emph{ideal} (perfectly efficient) time on $N$ devices as
\[
T_{\mathrm{ideal}}(N)\;\triangleq\;\frac{T(1)}{N},
\]
corresponding to linear speedup and no parallel overhead. A standard notion of
parallel efficiency is then
\[
\eta_{\mathrm{par}}(N)
\;\triangleq\;
\frac{T_{\mathrm{ideal}}(N)}{T(N)}
=
\frac{T(1)/N}{T(N)}
=
\frac{S(N)}{N}.
\]
This quantity equals $1$ under perfect scaling and decreases when communication,
synchronization, or load imbalance reduce speedup.

Since \eqref{eq:energy_time_proportional} gives $E(N)\propto T(N)$ up to a slowly
varying factor, normalizing energy by the ideal scaling yields the same efficiency
structure. In particular, relative deviations from ideal energy scaling are
captured (up to constants from $\bar{P}(N)/\bar{P}(1)$) by $\eta_{\mathrm{par}}(N)$.
Thus we may define a device-count hardware-efficiency factor
\[
\eta_h(N)\;\propto\;\eta_{\mathrm{par}}(N)\;=\;\frac{S(N)}{N},
\]
where the proportionality constant can absorb the bounded power-ratio term in
\eqref{eq:energy_ratio}.

\paragraph{Step 5: Transferability from time models.}
Finally, let $\widehat{T}(N)$ be any parametric model that accurately predicts
execution time (or speedup) over the device counts considered. Then the induced
efficiency proxy
\[
\widehat{\eta}_h(N)\;\triangleq\;\frac{\widehat{S}(N)}{N},
\qquad\text{with}\qquad
\widehat{S}(N)=\frac{\widehat{T}(1)}{\widehat{T}(N)},
\]
provides a consistent approximation to $\eta_h(N)$ in the sense that it captures
the leading-order dependence of energy on $N$ via \eqref{eq:energy_time_proportional}.
Therefore, time-based benchmark results can be transferred into the energy model
through $\eta_h(N)$, completing the proof.
\end{proof}

%% file: modelConfigs.tex
\newpage
\section{Model Configurations}
\label{app:train-details}

In our experiments, we train Transformer models for a fixed number of five epochs
across a broad range of architectural and training hyperparameters in order to
systematically characterize their computational and energy behavior. We vary batch
size and key architectural parameters—including model depth, hidden dimensionality,
feedforward width, and the number of attention heads—to capture how design choices
affect training dynamics, hardware utilization, and energy consumption.

Fixing the number of training epochs across all configurations ensures a fair and
controlled comparison between models. This design isolates the effects of
architectural scaling and hyperparameter selection on energy efficiency, independent
of convergence speed or final task performance. As a result, differences in measured
energy consumption can be attributed primarily to changes in compute intensity,
memory traffic, and parallel utilization rather than optimization dynamics.

This controlled setup enables a systematic analysis of trade-offs between training
time, computational cost, and resource utilization. In particular, it allows us to
study how increases in model capacity and throughput translate into practical energy
costs under realistic training workloads. The resulting measurements provide a
comprehensive empirical basis for the energy models developed in the main text.

The hyperparameter ranges explored in our experiments are summarized in
Table~\ref{tab:experimental_parameters}.

\begin{table}[h!]
\centering
\caption{Transformer model hyperparameters and explored ranges.}
\label{tab:experimental_parameters}
\resizebox{\columnwidth}{!}{%
\begin{tabular}{|l|l|}
\hline
\textbf{Parameter} & \textbf{Range and purpose} \\ \hline
Batch size &
$64$ to $640$, to study the impact of throughput and data parallelism on training
energy \\ \hline
Number of layers ($\ell$) &
$2$ to $12$, to evaluate the effect of model depth on compute, memory traffic, and
energy scaling \\ \hline
Model dimension ($d_{\mathrm{model}}$) &
$128$ to $1280$, corresponding to the dimensionality of embeddings and internal
representations \\ \hline
Number of attention heads ($n_{\mathrm{heads}}$) &
$2$ to $12$, to analyze the energy implications of multi-head self-attention \\ \hline
Feedforward dimension ($d_{\mathrm{ff}}$) &
$512$ to $4096$, controlling the width of Transformer MLP blocks and their compute
and memory intensity \\ \hline
\end{tabular}
}
\end{table}